\documentclass[sigconf]{acmart}

\usepackage{tikz}
\usepackage{amsfonts}
\usepackage{amsmath}
\usepackage{amsthm}
\usepackage{bm}
\usepackage{booktabs}
\usepackage{graphicx}
\usepackage{caption}
\usepackage{comment}

\usepackage{multirow} 
\usepackage{multicol}

\usepackage{tabularx} 
\usepackage{array, colortbl} 

\newtheorem{theorem}{Theorem}
\newtheorem{definition}{Definition}

\PassOptionsToPackage{table,xcdraw}{xcolor}
\usepackage{xcolor}

\definecolor{TablePurple}{RGB}{151, 107, 185}
\definecolor{TableBlue1}{RGB}{78, 98, 171}   
\definecolor{TableBlue2}{RGB}{70, 158, 180}  
\definecolor{TableGreen}{RGB}{135, 207, 164} 
\definecolor{TableYellow}{RGB}{254, 232, 154} 

\usepackage{algorithm}
\usepackage{algorithmicx}
\usepackage{algpseudocode}
\usepackage[switch]{lineno}

\usepackage{bbding}

\usepackage{enumitem}
\setlist[itemize]{leftmargin=*}

\AtBeginDocument{%
  }

\copyrightyear{2026}
\acmYear{2026}
\setcopyright{cc}
\setcctype{by}
\acmConference[KDD 2026] {Proceedings of the 32nd ACM SIGKDD Conference on Knowledge Discovery and Data Mining V.2}{August 9--13, 2026}{Jeju Island, Republic of Korea.}
\acmBooktitle{Proceedings of the 32nd ACM SIGKDD Conference on Knowledge Discovery and Data Mining V.2 (KDD 2026), August 9--13, 2026, Jeju Island, Republic of Korea}
\acmISBN{979-8-4007-2259-2/2026/08}
\acmDOI{10.1145/3770855.3817770}

\begin{document}

\title{\textsc{Broken Memories}: Detecting and Mitigating Memorization in Diffusion Models with Degraded Generations}


\author{Yuanmin Huang}
\email{yuanminhuang23@m.fudan.edu.cn}
\orcid{0000-0002-4843-5201}
\affiliation{
    \institution{Fudan University}
    \city{Shanghai}
    \country{China}
}

\author{Mi Zhang}
\authornotemark[1]
\email{mi_zhang@fudan.edu.cn}
\orcid{0000-0003-3567-3478}
\affiliation{
    \institution{Fudan University}
    \city{Shanghai}
    \country{China}
}

\author{Chen Chen}
\email{chenc24@m.fudan.edu.cn}
\orcid{0009-0008-1311-2040}
\affiliation{
    \institution{Fudan University}
    \city{Shanghai}
    \country{China}
}

\author{Feifei Li}
\email{ffli23@m.fudan.edu.cn}
\orcid{0009-0006-7868-4272}
\affiliation{
    \institution{Fudan University}
    \city{Shanghai}
    \country{China}
}

\author{Geng Hong}
\email{ghong@fudan.edu.cn}
\orcid{0000-0003-1811-9432}
\affiliation{
    \institution{Fudan University}
    \city{Shanghai}
    \country{China}
}

\author{Xiaoyu You}
\email{xiaoyuyou@ecust.edu.cn}
\orcid{0000-0001-9714-5545}
\affiliation{
    \institution{East China University of Science and Technology}
    \city{Shanghai}
    \country{China}
}

\author{Min Yang}
\authornote{Corresponding author: Mi Zhang and Min Yang.}
\email{m_yang@fudan.edu.cn}
\orcid{0000-0001-9714-5545}
\affiliation{
    \institution{Fudan University}
    \city{Shanghai}
    \country{China}
}

\begin{abstract}
While diffusion models excel at generating high-quality images, their tendency to memorize training data poses significant privacy and copyright risks.
In this work, we for the first time identify that memorization induces internal numerical instability, often manifesting as visually ``broken'' artifacts.
Inspired by stability analysis in numerical methods, we introduce \emph{empirical stability regions} based on latent update norms to quantitatively characterize stable behavior during generation.
Leveraging this, we propose a principled, on-the-fly framework for step-wise detection and adaptive mitigation.
Our approach suppresses memorization without altering prompts or guidance, thereby preserving semantic fidelity and image quality.
Extensive experiments on Stable Diffusion 1.4 demonstrate that our method achieves an AUC $>0.999$ detection performance and a $0.0\%$ memorization rate after mitigation with negligible overhead ($\approx0.01$s per image).
\end{abstract}

\begin{CCSXML}
<ccs2012>
   <concept>
       <concept_id>10010147.10010178.10010224</concept_id>
       <concept_desc>Computing methodologies~Computer vision</concept_desc>
       <concept_significance>500</concept_significance>
       </concept>
   <concept>
       <concept_id>10002978</concept_id>
       <concept_desc>Security and privacy</concept_desc>
       <concept_significance>500</concept_significance>
       </concept>
 </ccs2012>
\end{CCSXML}

\ccsdesc[500]{Computing methodologies~Computer vision}
\ccsdesc[500]{Security and privacy}

\keywords{Diffusion Model, Memorization Detection, Memorization Mitigation}


\maketitle

\section{Introduction}
Recent advancements in diffusion models have led to breakthroughs in high-quality image generation, showcasing immense potential in creative arts~\cite{rombach2022highresolutionimagesynthesislatent}. 
However, as model scales and training data volumes continue to grow, diffusion models increasingly face a critical issue: \textit{memorized generation}~\cite{somepalli2023understanding}. 
It is evident from recent research that diffusion models inadvertently replicate or highly resemble specific images from their training datasets, leading to potential privacy leakage and copyright infringements~\cite{carlini2023extracting,somepalli2023understanding,chen2024towards}.

\begin{figure}[t]
\centering
\includegraphics[width=\linewidth]{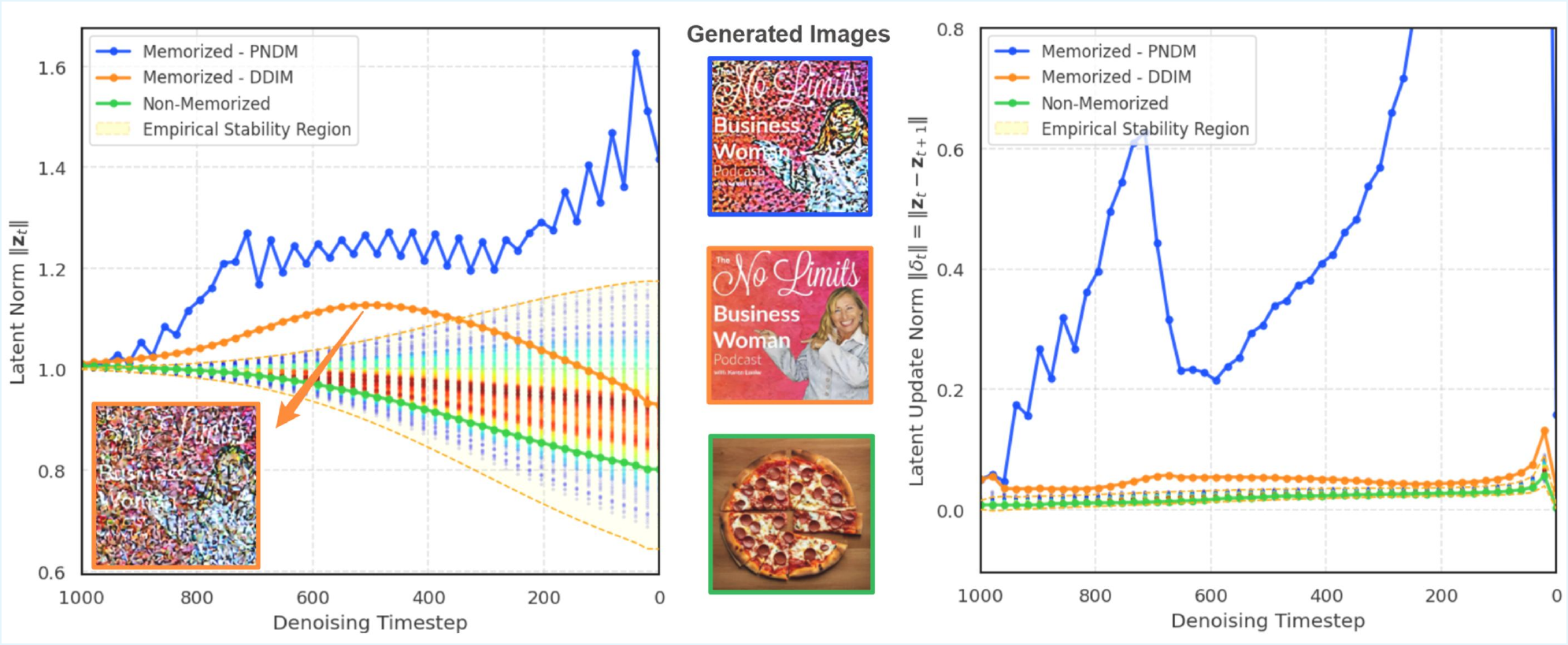}  
\caption{
Memorized generations (blue and orange borders, using PNDM and DDIM) show clear degradation—such as broken details or abnormal intermediate results—compared to non-memorized generations (green border). 
The yellow shaded regions in both plots indicate the empirical stability regions for latent norm ($\|\mathbf{z}_t\|$) and latent update norm ($\|\delta_t\|$). 
Memorized samples consistently deviate from these stability regions during generation, while non-memorized samples stay within them. This motivates our approach: by monitoring and intervening these latent deviations, we can accurately detect and mitigate memorization on the fly.
}
\label{fig:intro}
\end{figure}

Existing research has investigated memorization from multiple perspectives. 
Early studies detect it by comparing generated images with training samples~\cite{somepalli2023understanding}. 
Later works analyze the internal mechanisms underlying memorization, such as attention weights~\cite{ren2024unveiling} or magnitude of conditional guidance directions~\cite{wen2024detecting,chen2025exploringlocalmemorizationdiffusion}. 
To mitigate memorization, these methods propose interventions like modifying prompts~\cite{somepalli2023understanding}, tuning text embeddings~\cite{wen2024detecting}, initial latents~\cite{jeon2025understanding}, or directly adjusting guidance scale~\cite{jain2025classifier}. 
However, these approaches generally face two major limitations: 
\begin{itemize}
    \item \textbf{Dependence on re-generation:} 
    Most existing inference-time methods operate on a \textit{detect-then-retry} basis. They cannot correct the trajectory seamlessly. Instead, detection triggers an abort, requiring the generation to be restarted with modified inputs, which increases latency and computational cost.
    \item \textbf{Heuristic or static interventions:} Mitigation typically relies on fixed rules or static adjustments to prompts, embeddings, or guidance scales, which compromise semantic fidelity and fail to adapt to the dynamic nature of memorization, leading to over- or under-mitigation.
\end{itemize}

To overcome these limitations, we propose a principled framework for \textbf{on-the-fly detection and adaptive mitigation} of memorization in diffusion models, based on the modeling of \textit{empirical stability regions} of the generation dynamics. 
Our method is inspired by a key observation: when generating memorized samples, the PNDM (Pseudo-Numerical methods for Diffusion Models) scheduler~\cite{liu2022pseudonumericalmethodsdiffusion}, i.e., the default scheduler of Stable Diffusion v1~\cite{rombach2022highresolutionimagesynthesislatent}, often produces noticeable visual degradation, such as a \textit{``broken''} appearance, while non-memorized prompts yield normal generations (see Fig. \ref{fig:intro}). 
We trace this phenomenon to a fundamental cause: \textbf{memorization manifests as numerical instability in the latent dynamics}. 
Specifically, we observe that memorization causes anomalous deviations in the latent update per step ($\delta_t$) from model's \textit{empirical stable region}, which can be estimated from the model's normal behavior. 
We further prove that for non-memorized prompts, $\delta_t$ remains within this stability region, while for memorized prompts, it exhibits large, erratic fluctuations that lead to unstable denoising trajectories and visible artifacts in generated images. 
Notably, such instability is not limited to PNDM: with non-numerical schedulers like DDIM~\cite{song2022denoisingdiffusionimplicitmodels}, even when the generated images appear visually plausible, we still observe similar anomalies throughout the generation in the internal latent space. 

Building on this insight, we propose a \textbf{high-precision, step-wise} detection system that monitors the deviation of $\delta_t$ from the empirical stability region at each denoising step.
This enables accurate diagnosis of whether the current generation trajectory is tending toward memorization. 
Detected instabilities trigger an \textbf{adaptive mitigation} strategy that distinguishes between mild and strong memorization: mild cases are regularized early through latent-magnitude constraints, while strong cases require stricter joint constraints on both latent and update norms throughout the trajectory. 
Both mitigations are performed based on the estimated stability regions from a small set of normal prompts (e.g., $50$ generations from LAION-400M~\cite{schuhmann2022laion}). 
The entire process operates \textbf{on the fly}, with no modification to the prompt or guidance, ensuring both semantic fidelity and image quality.
Table~\ref{tab:baseline_comparison} summarizes representative baseline methods, highlighting that our method uniquely achieves all three capabilities.

Our main contributions are summarized as follows:
\begin{itemize}
    \item We present the first systematic analysis linking degraded memorization generation to numerical instability in diffusion sampling, revealing distinctive anomalies in latent updates ($\delta_t$) magnitudes. 
    \item We introduce the concept of \textit{empirical stability regions}, grounded in numerical stability theory and extended to broader scenarios, to characterize and diagnose normal versus memorized generation in diffusion models.
    \item We develop an on-the-fly, diagnosis-driven framework that adaptively constrains the generation trajectories only when instability is detected, surpassing existing heuristic or static methods. 
    \item Our approach achieves superior detection accuracy (AUC $> 0.999$) in identifying memorization and completely eliminates memorized generations ($0.0\%$ rate) while preserving high image quality and semantic fidelity, with minimal overhead ($\approx0.01$\,s per image).
\end{itemize}

\begin{table}[t]
\centering
\caption{Comparison with representative baselines. Our method uniquely achieves both on-the-fly and adaptive mitigation with integrated detection. }
\resizebox{\linewidth}{!}{
\begin{tabular}{lcccccc}
\toprule
{Method}  & RTA \cite{somepalli2023understanding}           & Wen et al. \cite{wen2024detecting}      & Ren et al.  \cite{ren2024unveiling}     &  Jeon et al. \cite{jeon2025understanding} & Jain et al. \cite{jain2025classifier}   & Ours             \\ \midrule
With detection?             & \XSolidBrush & \CheckmarkBold & \CheckmarkBold &  \CheckmarkBold 
&\XSolidBrush & \CheckmarkBold \\ 
On-the-fly mitigation?       & \XSolidBrush & \XSolidBrush   & \XSolidBrush   &  \XSolidBrush   
&\XSolidBrush & \CheckmarkBold \\ 
Adaptive mitigation?       & \XSolidBrush & \XSolidBrush   & \XSolidBrush   &  \XSolidBrush   &\CheckmarkBold & \CheckmarkBold \\
\bottomrule  
\end{tabular}
}
\label{tab:baseline_comparison}
\end{table}
\section{Related Work}

\subsection{Memorization Detection in Diffusion Models}
Recent studies have revealed that diffusion models are prone to memorizing and reproducing training images~\cite{chavhan2024memorizedimagesdiffusionmodels,bonnaire2025diffusionmodelsdontmemorize,luo2024privacypreservinglowrankadaptationmembership,guan2025ufidunifiedframeworkinputlevel}. 
Early detection methods rely on post-hoc comparisons between generated samples and the training set, such as using Euclidean distance~\cite{carlini2023extracting} or features extracted by self-supervised copy detection (SSCD) models~\cite{pizzi2022self}. 
However, these image-level approaches are often confounded by complex patterns and require the generation to be fully completed.

To enable earlier detection, subsequent works have explored internal model signals. 
Wen et al.~\cite{wen2024detecting} and Ren et al.~\cite{ren2024unveiling} demonstrated that memorized prompts induce significantly larger guidance magnitudes and distinct attention patterns compared to non-memorized ones. 
Beyond signal magnitude, recent studies have emphasized the \textit{dynamics} and \textit{geometry} of the generative process. 
Brokman et al.~\cite{brokman2025tracking} analyzed the memorization geometry via the mean-curvature of the CFG vector field, while Jeon et al.~\cite{jeon2025understanding} linked memorization to the sharpness of the initial latent probability landscape. 
These works provide a deeper theoretical understanding of why memorization manifests as anomalous dynamics.
Other approaches focus on specific scenarios, such as detecting memorized generations with particular images~\cite{jiang2025imagelevel} or identifying local memorization patterns~\cite{chen2025exploringlocalmemorizationdiffusion}. 

Despite these advances, most detection methods generally serve as diagnostic tools rather than real-time controllers. 
Even when used for mitigation, they typically trigger a re-generation rather than enabling seamless, on-the-fly intervention during the generation trajectory.

\subsection{Memorization Mitigation in Diffusion Models}
Mitigation strategies can be broadly categorized into training-time and test-time interventions. 
Training-time methods generally introduce regularization or data filtering to reduce memorization risk. 
Common approaches include monitoring guidance magnitude~\cite{wen2024detecting} or attention patterns~\cite{ren2024unveiling} during training, and removing batches that exceed predefined thresholds. 
Other methods augment prompts to reduce replication likelihood~\cite{somepalli2023understanding}. 
While effective, these techniques require model retraining or fine-tuning, which is computationally prohibitive for large-scale models and inflexible for deployed systems.

Test-time approaches, on the other hand, aim to mitigate memorization during the inference phase. 
Typical methods involve modifying prompts~\cite{somepalli2023understanding,chen2025enhancing}, optimizing text embeddings~\cite{wen2024detecting,chen2025exploringlocalmemorizationdiffusion}, optimizing initial latents~\cite{jeon2025understanding}, adjusting attention weights~\cite{ren2024unveiling}, or tuning guidance scales~\cite{jain2025classifier}. 
However, these strategies face two critical limitations. 
First, they often operate on a \textit{detect-then-retry} paradigm: mitigation is applied only after a problematic generation is detected partially or fully, triggering a restart with modified inputs. This dependency on regeneration significantly increases inference latency.
Second, interventions are typically applied as heuristic or static adjustments, e.g., fixed guidance reduction or embedding optimization at initial step, rather than adapting dynamically to the evolving generation trajectory. This can lead to either under-mitigation or over-constraint, potentially harming image quality.

In contrast, our work introduces an adaptive, trajectory-aware framework that enables on-the-fly monitoring and seamless intervention without the need for regeneration, effectively ensuring both safety and efficiency.

\section{Preliminaries} \label{sec:pilot}

\subsection{Diffusion Models}
Diffusion models~\cite{ho2020denoising,song2020score} define a forward noising process that gradually perturbs a data sample \( \mathbf{x}_0 \sim p_{\text{data}}(\mathbf{x}_0) \) into pure noise through a continuous-time stochastic differential equation (SDE):
\begin{equation}
    d\mathbf{x}(t) = f(\mathbf{x}(t), t)\,dt + g(t)\,d\mathbf{w},
\end{equation}
where \( \mathbf{w} \) is the standard Wiener process, \( f(\mathbf{x}(t), t) \) and \( g(t) \) are the drift and diffusion coefficients, respectively. 

The sampling is managed by the reverse-time SDE, which is formulated as:
\begin{equation}
    d\mathbf{x}(t) = \left[ f(\mathbf{x}(t), t) - g(t)^2 \nabla_{\mathbf{x}} \log p_t(\mathbf{x}(t)) \right] dt + g(t)\,d\bar{\mathbf{w}},
\end{equation}
where \( \bar{\mathbf{w}} \) denotes the reverse-time Wiener process, and \( \nabla_{\mathbf{x}} \log p_t(\mathbf{x}(t)) \) is approximated by a neural network \( \epsilon_\theta(\mathbf{x}_t, t) \) trained to predict the added noise.

This SDE formulation unifies the forward and reverse processes, enabling sampling through either stochastic or deterministic solvers.

\subsection{Text-Conditional Latent Diffusion Models}
In this work, we focus on latent diffusion models (LDMs)~\cite{rombach2022highresolutionimagesynthesislatent} for text-to-image generation, e.g., Stable Diffusion, where the generative process is performed in a latent space rather than the original pixel space.
During sampling, a pretrained autoencoder \( \mathcal{E} \) maps an image \( \mathbf{x} \in \mathbb{R}^{H \times W \times 3} \) to a latent representation \( \mathbf{z} = \mathcal{E}(\mathbf{x}) \in \mathbb{R}^{h \times w \times d} \).
The diffusion model is trained to denoise samples with prediction \( \epsilon_\theta(\mathbf{x}_t, t, c) \) in this latent space given a text prompt \( c \), and a decoder \( \mathcal{D} \) is used to reconstruct the image \( \mathbf{x} = \mathcal{D}(\mathbf{z}) \)~\cite{Kingma_2019}.

\subsection{Schedulers}
Schedulers play a crucial role in diffusion models by controlling the denoising trajectory during the sampling process, determining how the model transitions from pure noise to a clean sample.

\noindent
\textbf{Denoising diffusion implicit models (DDIM)}~\cite{song2022denoisingdiffusionimplicitmodels} introduce a non-stochastic, deterministic alternative to sampling. The DDIM update at step \( t-1 \) is given by:
\begin{equation}
    \mathbf{z}_{t-1} = \sqrt{\bar{\alpha}_{t-1}} \hat{\mathbf{z}}_0^{(t)} + \sqrt{1 - \bar{\alpha}_{t-1}} \boldsymbol{\epsilon}_\theta(\mathbf{z}_t, t, c),
\end{equation}
where \( \hat{\mathbf{z}}_0^{(t)} = \frac{1}{\sqrt{\bar{\alpha}_t}} \left( \mathbf{z}_t - \sqrt{1 - \bar{\alpha}_t} \boldsymbol{\epsilon}_\theta(\mathbf{z}_t, t, c) \right) \) denotes the model's estimate of the original clean sample at the current step.
This formulation corresponds to solving an ODE defined by the probability flow of the diffusion process.

\noindent
\textbf{Pseudo numerical methods for diffusion models (PNDM)}~\cite{liu2022pseudonumericalmethodsdiffusion} accelerate DDIM by using high-order numerical solvers to integrate the underlying ODE.
Given the DDIM update as an Euler discretization, PNDM improves sampling efficiency and quality via higher-order methods such as Runge–Kutta~\cite{BUTCHER1996247} or Adams-Bashforth schemes~\cite{1991thirdorder}.
The fourth-order Adams-Bashforth (AB4) method used by PNDM is defined as:
\begin{equation}  \label{eq:pndm}
\mathbf{z}_{t+1} = \mathbf{z}_t + \Delta t \cdot \sum_{i=0}^{3} b_i f_\theta(\mathbf{z}_{t-i}, t-i, c),
\end{equation}
where \( b_i \) are fixed coefficients derived from the method, \( \Delta t \) is the integration step size, and \( f_\theta(\mathbf{z}, t, c) = -\epsilon_\theta(\mathbf{z}, t, c) \) represents the model's denoising prediction.\footnote{Following the convention of numerical ODE solvers, the equation is written in the forward direction (\( t \to t+1 \)), while the actual sampling process proceeds in the reverse direction (\( t \to t-1 \)), i.e., from noise to data. }
The formulation bridges the gap between the deep generative model and classical ODE solvers by viewing the denoising trajectory as an integration path over estimated derivatives.

\section{Understanding Degraded Generations for Memorized Prompts}

\begin{figure*}[t]
\centering
\includegraphics[width=\linewidth]{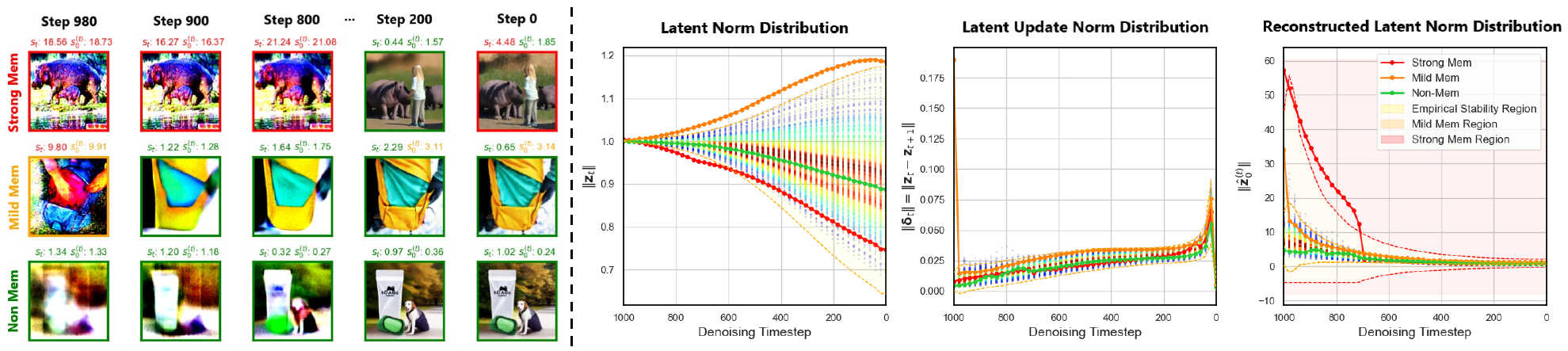}  
\caption{
    \textbf{Left:} On-the-fly detection and mitigation progress. Each row visualizes predicted $\mathbf{x}_0^{(t)} = \mathcal{D}(\hat{\mathbf{z}}_0^{(t)})$ across denoising steps for strong, mild, and non-memorized prompts, respectively. 
    Red, orange, and green borders denote strong, mild, and no mitigation applied. 
    Strong memorization requires continuous mitigation, mild memorization only a few early interventions, and non-memorized prompts evolve normally without intervention.
    \textbf{Right:} Empirical stability regions (yellow-shaded) derived from normal generations, defined on latent norm ($\|\mathbf{z}_t\|$), latent update norm ($\|\delta_t\| = \|\mathbf{z}_t - \mathbf{z}_{t+1}\|$), and reconstructed latent norm ($\|\hat{\mathbf{z}}_0^{(t)}\|$), respectively.
    Colored trajectories represent generation progress under our mitigation (red/orange/green for strong/mild/non-memorization). 
    The trajectories of strong and mild prompts are adaptively constrained within their stability regions, while non-memorized prompts remain stable throughout generation.
}
\label{fig:method}
\end{figure*}

\subsection{Empirical Stability Region via Latent Trajectories}
As discussed in Sec.~\ref{sec:pilot}, PNDM employs high-order numerical solvers (e.g., AB4) to integrate the latent ODE. 
From the perspective of classical numerical analysis, the stability of such solvers is strictly dictated by the local linearization of the dynamics $d\mathbf{z}/dt = \mathbf{J}\mathbf{z}$, represented by the Jacobian matrix $\mathbf{J}$. 
Formally, stability requires that the \emph{scaled eigenvalues} $\nu := \lambda \Delta t$ (where $\lambda$ are eigenvalues of $\mathbf{J}$) lie within the solver's absolute stability region $\mathcal{R}_{\mathrm{AB}} \subset \mathbb{C}$~\cite{hairer1993solving,smith1985numerical}.
If this condition is violated ($\nu \notin \mathcal{R}_{\mathrm{AB}}$), the gradient updates accumulated by the solver will inherently cause the numerical solution to oscillate or diverge.

While this theoretical insight provides an underlying explanation for the ``broken'' artifacts observed in memorized generations, applying it directly to diffusion models is non-trivial.
In diffusion sampling, the latent $\mathbf{z}_t$ evolves iteratively governed by the denoising network $f_\theta(\mathbf{z}_t, t, c)$, which is parameterized by a complex, deep neural network (U-Net).
This makes the explicit computation of its local Jacobian spectrum computationally intractable. 
Consequently, we cannot analytically seek the stability boundaries based on eigenvalue analysis.
However, the manifestation of instability remains consistent with the classical theory, i.e., \textit{anomalously large or erratic updates in the solution trajectory.}
Therefore, instead of analyzing the inaccessible eigenvalues, we propose an \emph{empirical stability region} that approximates the stable behavior of the denoiser empirically by directly monitoring the magnitude of latent updates distributions.

Let $\delta_t := \mathbf{z}_{t} - \mathbf{z}_{t+1}$ denote the latent update at step $t$, corresponding to a numerical integration step in the reverse-time diffusion trajectory. Given a set of normal (i.e., non-memorized) prompts $c$ and their generation trajectories, we collect latent updates $\delta_t$ and compute their statistical distribution over timesteps. We define the empirical stability region as follows:

\begin{definition}[Empirical Stability Region of $\delta_t$] \label{def:stability-region}
Let $\mu_t$ and $\sigma_t$ denote the mean and standard deviation of $\|\delta_t\|$ estimated from non-memorized samples at timestep $t$. The empirical stability region at step $t$ is defined as the interval:
\[
\mathcal{R}_{\delta}^{(t)} := \left[ \mu_t - \gamma \sigma_t,\ \mu_t + \gamma \sigma_t \right],
\]
where $\gamma$ is a tunable confidence width (e.g., $\gamma=3$).
\end{definition}
This region captures the typical scale of latent transitions during generation, as shown in Fig. \ref{fig:method} (right). Deviations beyond this range indicate abnormally large steps, which may reflect instability.

\begin{theorem}[Normal Trajectories Stability]
\label{thm:normal-stability}
Let $\delta_t$ denote the latent updates of normal prompts, and let $\mathcal{R}_{\delta}^{(t)}$ be the empirical stability region defined in Def. \ref{def:stability-region}. Then with high probability, we have:
\[
\|\delta_t\| \in \mathcal{R}_{\delta}^{(t)}, \quad \forall t \in \{1, \dots, T\}.
\]
In other words, latent trajectories from non-memorized prompts tend to stay within their corresponding empirical stability region throughout the generation.
\end{theorem}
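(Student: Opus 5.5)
The statement is probabilistic and informal, so I would prove it as a concentration statement. Treat the latent update norm of a normal prompt at step $t$ as a random variable $X_t := \|\delta_t\|$. The randomness comes from the initial noise $\mathbf{z}_T \sim \mathcal{N}(0,I)$ and from the prompt $c$ drawn from the distribution of non-memorized prompts. Let $(\mu_t,\sigma_t^2)$ be the true mean and variance of $X_t$. The empirical stability region of Def.~\ref{def:stability-region} is then a $\gamma$-standard-deviation band around the mean.

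The first step handles a single timestep using Chebyshev's inequality:
\[
\Pr\bigl(\|\delta_t\| \notin \mathcal{R}_\delta^{(t)}\bigr) = \Pr\bigl(|X_t-\mu_t| > \gamma\sigma_t\bigr) \le \frac{1}{\gamma^2}.
\]
This needs only finite variance. Finite variance holds because $\delta_t$ is a fixed linear combination, with scheduler coefficients, of $\mathbf{z}_{t+1}$ and a bounded number of network outputs $\epsilon_\theta$, as in Eq.~\eqref{eq:pndm} or the DDIM update. On the normal-prompt regime, $\epsilon_\theta$ is Lipschitz with moderate constant, which is exactly the stability assumption $\nu=\lambda\Delta t\in\mathcal{R}_{\mathrm{AB}}$. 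This gives second moments bounded by a constant times $\mathbb{E}\|\mathbf{z}_T\|^2$.

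To sharpen the bound to something like $2e^{-\gamma^2/(2L_t^2)}$, I would use Gaussian concentration. In the deterministic samplers (DDIM/PNDM), $\delta_t$ is a function $F_t(\mathbf{z}_T)$ of the Gaussian initial latent, for fixed $c$. Under the stable-regime assumption, the composition of solver steps has Lipschitz constant $L_t \le \prod_s (1+\Delta t\,\|\mathbf{J}_s\|\sum_i|b_i|)$, and this product stays bounded because the scaled eigenvalues lie in $\mathcal{R}_{\mathrm{AB}}$. The norm map is 1-Lipschitz, so $X_t$ is $L_t$-Lipschitz in $\mathbf{z}_T$. The Gaussian Lipschitz concentration inequality then gives a sub-Gaussian tail around $\mu_t$, with $\sigma_t \lesssim L_t$.

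The second step extends the result from one timestep to all $t\in\{1,\dots,T\}$ by a union bound. The failure probability is at most $T/\gamma^2$ in the crude version, or $2T e^{-c\gamma^2}$ in the sub-Gaussian version. The latter is small for $\gamma=3$ and the typical $T=50$ once the ratio $\sigma_t/L_t$ is controlled.

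The third step accounts for using the empirical $\hat\mu_t,\hat\sigma_t$ computed from $N$ normal generations (e.g., $N=50$) in place of the true moments. Standard concentration for the sample mean and variance gives $|\hat\mu_t-\mu_t|\le \epsilon\sigma_t$ and $\hat\sigma_t\ge(1-\epsilon)\sigma_t$ with high probability. The inflation $\gamma \to \gamma(1-\epsilon)-\epsilon$ is then absorbed into the tail bound.

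The main obstacle is the per-step bound in the sharp form. The crude union bound $T/\gamma^2$ exceeds $1$ for $\gamma=3$, $T=50$, so the sub-Gaussian route is needed. That route requires a non-degenerate, controlled Lipschitz constant of the U-Net-driven trajectory map. This cannot be verified analytically, so I would state it explicitly as the stability hypothesis characterizing ``normal'' prompts, namely bounded local Jacobians with scaled eigenvalues in $\mathcal{R}_{\mathrm{AB}}$. With that hypothesis in place, the remaining steps are routine concentration arguments.
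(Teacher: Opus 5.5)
Your skeleton matches the paper's proof: a per-step tail bound for $\|\delta_t\|$ in units of $\sigma_t$, followed by a union bound over the $T$ steps. The paper does not derive the tail. It writes $\delta_t=\Delta t\, f_\theta(\mathbf{z}_t,t,c)$ for an Euler step, simply assumes $\Pr(|\|\delta_t\|-\mu_t|\ge\gamma\sigma_t)\le 2e^{-\gamma^2/2}$, and sums to $2Te^{-\gamma^2/2}$. It also conflates the empirical $\mu_t,\sigma_t$ with population moments, so your finite-sample step is a real addition.

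Where you go beyond the paper, deriving that tail from Gaussian Lipschitz concentration, the argument does not close. That inequality gives $\Pr(|X_t-\mathbb{E}X_t|\ge\gamma\sigma_t)\le 2\exp(-\gamma^2\sigma_t^2/(2L_t^2))$. So what you need is $L_t\le C\sigma_t$, a \emph{lower} bound on the fluctuations. The relation you wrote, $\sigma_t$ at most of order $L_t$, is the Poincar\'e direction and does not help here. Your stability hypothesis (bounded Jacobians, $\lambda\Delta t\in\mathcal{R}_{\mathrm{AB}}$) bounds $L_t$ only from above, so it cannot supply this. Besides, the eigenvalue condition does not bound $\|\mathbf{J}_s\|$ for non-normal Jacobians, and your product bound comes from the triangle inequality, so $\mathcal{R}_{\mathrm{AB}}$ plays no role in it.

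Worse, for a fixed prompt, where Gaussian concentration actually applies, Poincar\'e forces the conditional standard deviation to be at most $L_t$. This route therefore can never beat $2e^{-\gamma^2/2}$ per step, and $2Te^{-\gamma^2/2}=100\,e^{-4.5}\approx 1.11$ at $\gamma=3$, $T=50$. Your claim that the sharp route rescues these parameters is false, just as for the crude $T/\gamma^2$. The same arithmetic shows the paper's own bound is vacuous at the parameters it uses. A nonvacuous simultaneous statement needs either $\gamma\ge\sqrt{2\ln(2T/\eta)}$ (about $3.9$ for $\eta=0.05$) or an argument that uses the strong correlation of $\|\delta_t\|$ across steps instead of a union bound. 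The inflation $\gamma\mapsto\gamma(1-\epsilon)-\epsilon$ from only $N=50$ reference samples makes the numbers worse still.

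Separately, you take the randomness to be over both $\mathbf{z}_T$ and $c$, and $\mu_t,\sigma_t$ are pooled over reference prompts. But you apply Gaussian concentration for fixed $c$, so it only controls $X_t-\mathbb{E}[X_t\mid c]$. The between-prompt term $\mathbb{E}[X_t\mid c]-\mu_t$ has no Gaussian structure. Without its own tail assumption you get only Chebyshev-type $O(\gamma^{-2})$ control for it, which brings back the vacuous union bound. Once you add that assumption, namely that $\|\delta_t\|$ is sub-Gaussian over the pooled (prompt, seed) distribution with variance proxy comparable to $\sigma_t^2$, you have assumed exactly what the paper assumes, and the Lipschitz machinery does no work. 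The defensible version of your proof is the paper's argument with three changes: state that assumption explicitly, add your finite-sample correction, and choose $\gamma$ so that the resulting union bound is actually small rather than fixing $\gamma=3$.
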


\begin{proof}[Sketch]
The empirical region $\mathcal{R}_{\delta}^{(t)}$ is constructed from the empirical mean $\mu_t$ and std. $\sigma_t$ over $\|\delta_t\|$ sampled from the reference set. Under the assumption that these updates follow a sub-Gaussian distribution~\cite{ho2020denoising,song2020score}, concentration bounds ensure that most values lie within $[\mu_t \pm \gamma \sigma_t]$. Thus, $\mathcal{R}_{\delta}^{(t)}$ captures the high-probability support of normal trajectories.
Detailed proofs can be found in Appendix. \ref{sec:proof}. 
\end{proof}

\subsection{Instability from Over-Guidance in Memorized Generations}
To understand why memorized prompts lead to instability, we examine the mechanism of classifier-free guidance (CFG)~\cite{ho2022classifierfreediffusionguidance}. Recall that the CFG prediction is computed as:
\begin{equation}  \label{eq:cfg}
\epsilon_\theta^{\text{CFG}}(\mathbf{z}_t, t, c) = \epsilon_\theta(\mathbf{z}_t, t, \phi) + w \cdot \left[ \epsilon_\theta(\mathbf{z}_t, t, c) - \epsilon_\theta(\mathbf{z}_t, t, \phi) \right],
\end{equation}
where $w$ is the guidance scale, $c$ is the conditional prompt, and $\phi$ is the unconditional generation. 
Since memorization corresponds to the model overfitting to specific training samples~\cite{bonnaire2025diffusion}, prior work has shown that the CFG often exhibits sharp gradients and abnormally large magnitudes for memorized prompts~\cite{wen2024detecting}. 

In the context of multi-step solvers like PNDM, the latent update step $\delta_t$ is directly proportional to this prediction:
\begin{equation}
\delta_t = \mathbf{z}_t - \mathbf{z}_{t+1} \propto \Delta t \cdot \epsilon_\theta^{\text{CFG}}(\mathbf{z}_t, t, c).
\end{equation}
Consequently, the guidance scale $w$ amplifies the already excessive gradients of memorized prompts, resulting in anomalously large latent updates $\delta_t$. We formalize this intuition in the following proposition.

\begin{proposition}[Instability of Memorized Trajectories]
\label{prop:mem-instability}
Let $c_{mem}$ be a memorized prompt where the conditional noise prediction norm significantly dominates the unconditional one, such that $\|\epsilon_\theta(\mathbf{z}_t, t, c_{mem})\| \ge \beta \|\epsilon_\theta(\mathbf{z}_t, t, \phi)\|$ for a large factor $\beta > 1$.
With guidance scale $w$, the magnitude of the latent update $\|\delta_t\|$ for the memorized prompt satisfies:
\[
\|\delta_t\|_{mem} \approx w \cdot C \cdot \|\epsilon_\theta(\mathbf{z}_t, t, c_{mem})\|,
\]
where $C$ is a constant related to the step size. Consequently, due to the excessively large gradient norm $\|\epsilon_\theta(\mathbf{z}_t, t, c_{mem})\|$ (large $\beta$), the resulting update $\|\delta_t\|_{mem}$ exceeds the upper bound of the stability region $\mathcal{R}_{\delta}^{(t)} = [\mu_t \pm \gamma \sigma_t]$ calibrated on normal samples, causing the trajectory to diverge from the stable manifold.
\end{proposition}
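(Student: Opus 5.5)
The argument is a direct norm estimate on the CFG prediction in \eqref{eq:cfg}, fed into the solver's linear update and compared against the calibrated interval of Def.~\ref{def:stability-region}. I will use the abbreviations $\epsilon_c := \epsilon_\theta(\mathbf{z}_t, t, c_{mem})$ and $\epsilon_\phi := \epsilon_\theta(\mathbf{z}_t, t, \phi)$.

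\textbf{Step 1: express the update through the CFG prediction.} For the PNDM/AB4 update \eqref{eq:pndm}, as for DDIM, $\delta_t$ is a fixed linear combination of the current (and, for AB4, past) CFG predictions, with an additional term proportional to $\mathbf{z}_t$ that comes from the $\bar\alpha$ rescaling. I will write this as $\|\delta_t\| = C\,\|\epsilon^{\text{CFG}}_\theta\| + r_t$. Here $C$ depends only on $\Delta t$ and the noise schedule. The residual $r_t$ collects the $\mathbf{z}_t$-proportional part and the mismatch between the multistep history and the current prediction. I will assume, as the paper does, that consecutive predictions are comparable, so that $r_t$ is bounded by a quantity that is itself of the order of the normal-sample updates.

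\textbf{Step 2: bound the CFG norm.} Rewriting \eqref{eq:cfg} gives $\epsilon^{\text{CFG}}_\theta = w\,\epsilon_c - (w-1)\,\epsilon_\phi$. Applying the hypothesis $\|\epsilon_\phi\| \le \|\epsilon_c\|/\beta$ together with the triangle inequality yields
\[
\Bigl(w - \tfrac{w-1}{\beta}\Bigr)\|\epsilon_c\| \;\le\; \|\epsilon^{\text{CFG}}_\theta\| \;\le\; \Bigl(w + \tfrac{w-1}{\beta}\Bigr)\|\epsilon_c\|.
\]
For large $\beta$ both bounds equal $w\,\|\epsilon_c\|\,(1 + O(1/\beta))$. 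Combining this with Step~1 gives $\|\delta_t\|_{mem} \approx w\,C\,\|\epsilon_c\|$, which is the stated approximation.

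\textbf{Step 3: exceedance of the stability region.} For normal prompts, the same decomposition gives $\mu_t \approx C\,\|\epsilon^{\text{CFG}}_{\text{normal}}\|$, where the guidance difference is modest and $\|\epsilon_\theta\|$ is of order $\sqrt{d}$, as for a standard Gaussian target. The sub-Gaussian concentration used in Theorem~\ref{thm:normal-stability} keeps $\sigma_t$ small relative to $\mu_t$. The upper bound $\mu_t + \gamma\sigma_t$ is therefore at most $C\,K\sqrt{d}$ for some constant $K$ depending on $w$ and $\gamma$. It then suffices to show $w\,C\,\|\epsilon_c\|(1 - 1/\beta) > C\,K\sqrt{d}$. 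Since $\|\epsilon_\phi\|$ is itself of order $\sqrt{d}$ (the unconditional branch behaves normally), we have $\|\epsilon_c\| \ge \beta\,\|\epsilon_\phi\| \approx \beta\sqrt{d}$. The inequality therefore holds once $\beta$ exceeds an explicit threshold of order $K/w$.

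\textbf{Main obstacle.} The hardest step is the comparison in Step~3. Calibrating $\mu_t$ and $\sigma_t$ against an uncontrolled neural predictor requires modelling assumptions: that the unconditional branch has norm comparable to that on normal samples, and that the residual $r_t$ from the multistep history does not cancel the large term. I would state these explicitly as hypotheses rather than derive them, in the same spirit as the sub-Gaussian assumption behind Theorem~\ref{thm:normal-stability}. The proposition then holds as a sufficient-condition statement: $\beta$ larger than an explicit threshold implies $\|\delta_t\|_{mem} > \mu_t + \gamma\sigma_t$.
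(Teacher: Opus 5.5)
Your argument is correct at the paper's level of rigor, and its core is the paper's proof. The paper also rewrites $\epsilon^{\text{CFG}}_\theta = w\,\epsilon_c-(w-1)\,\epsilon_\phi$, applies the reverse triangle inequality, and substitutes $\|\epsilon_\phi\|\le\|\epsilon_c\|/\beta$ to get the factor $w-(w-1)/\beta$. It states $w>1$ explicitly, and your bound in that form needs this as well.

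You differ from the paper at the two ends. At the start, the paper proves only the lower bound before writing ``$\approx$''. It also simply sets $\delta_t=\Delta t\cdot\epsilon^{\text{CFG}}_\theta$ and folds the solver coefficients into $C$. Your matching upper bound and explicit residual $r_t$ are what actually justify a two-sided approximation. At the end, for the exceedance, the paper adds a further assumption, $\|\epsilon_\theta(\mathbf{z}_t,t,c_{mem})\|\gg\|\epsilon_\theta(\mathbf{z}_t,t,c_{norm})\|$, and declares the violation inevitable. You instead anchor $\|\epsilon_\phi\|$ at its normal $\sqrt{d}$ scale. This turns the purely relative $\beta$-hypothesis into an absolute size and gives an explicit threshold $\beta>1+K/w$. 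The paper's version is shorter. Yours shows that the ratio hypothesis alone cannot force exceedance, and it isolates the extra modelling assumption both proofs rely on.

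One correction to your bookkeeping: the $\mathbf{z}$-proportional part of $r_t$ need not be of the order of normal-sample updates. For DDIM at early steps it is comparable in size to $C\|\mathbf{z}_t\|$ and largely cancels the $\epsilon$-term on normal prompts. So $\mu_t$ can sit well below $C\|\epsilon^{\text{CFG}}_{\text{normal}}\|$ rather than near it. This does not break the argument, because you only use the upper bounds $|r_t|=O(C\sqrt{d})$ and $\mu_t+\gamma\sigma_t=O(C\sqrt{d})$. These hold once you assume $\|\mathbf{z}_t\|=O(\sqrt{d})$ on both normal and memorized trajectories, alongside your multistep hypothesis. Then enlarge $K$ to absorb $|r_t|$ in Step~3, where you currently drop it.
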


\begin{proof}[Sketch]
For memorization, the model output is highly peaked around the memorized data point, implying $\|\epsilon_\theta(\mathbf{z}_t, t, c_{mem})\| \gg \|\epsilon_\theta(\mathbf{z}_t, t, \phi)\|$. Substituting this into Eq.~\ref{eq:cfg} and neglecting the smaller unconditional term yields $\|\epsilon_\theta^{\text{CFG}}\| \approx w \|\epsilon_\theta(\mathbf{z}_t, t, c_{mem})\|$. Since $\delta_t$ is computed from $\epsilon_\theta^{\text{CFG}}$ scaled by the step size (constant $C$), we obtain $\|\delta_t\|_{mem} \approx w \cdot C \cdot \|\epsilon_\theta(\mathbf{z}_t, t, c_{mem})\|$. Because the stability threshold is derived from normal samples with much smaller gradients, this amplified update naturally violates the bound. The instability is thus driven by the intrinsic sharpness of the memorized minima.
Detailed proofs can be found in Appendix. \ref{sec:proof}. 
\end{proof}

Empirically, this instability is most pronounced during the early-to-mid generation steps when the coarse image structure is formed. The excessive updates $\delta_t$ cause the trajectory to ``overshoot'', manifesting in the final image as high-frequency artifacts, such as over-sharpened edges or broken textures (see Fig.~\ref{fig:intro}).

This effect is further exacerbated by high-order solvers like PNDM (AB4). Unlike single-step methods (e.g., Euler), AB4 accumulates momentum from previous steps. Consequently, an initial deviation caused by over-guidance is not immediately corrected but rather propagated and amplified over subsequent steps, leading to severe visual degradation.

\subsection{Extending Stability Analysis to Reconstructed Initial Latent}
Having attributed the visual degradation in PNDM to numerical instability, we now broaden our scope to the widely-used DDIM sampler~\cite{song2022denoisingdiffusionimplicitmodels}. We demonstrate that \textit{deviating from the stability region} is a universal signature of memorization: even when DDIM produces visually clean images, its underlying latent trajectory still significantly diverges from the stable manifold.

To examine whether instability persists, we turn to the reconstruction mechanism of DDIM. At each timestep $t$, DDIM performs deterministic recovery of an estimate of the initial latent $\hat{\mathbf{z}}_0$ via:
\[
\hat{\mathbf{z}}_0^{(t)} = \frac{1}{\sqrt{\bar{\alpha}_t}} \left( \mathbf{z}_t - \sqrt{1 - \bar{\alpha}_t} \, \epsilon_\theta(\mathbf{z}_t, t, c) \right).
\]
This quantity is meant to approximate the actual $z_0$ at current timestep $t$, i.e., the latent that would be decoded by the autoencoder into the final image $x_0$. As such, it acts as an intermediate ``anchor'' that encapsulates the accumulated influence of all denoising steps up to time $t$.
Similar to our analysis of $\delta_t$, we define an empirical stability region for $\hat{\mathbf{z}}_0^{(t)}$. 

\begin{definition}[Empirical Stability Region of $\hat{\mathbf{z}}_0$] \label{def:z0-stability-region}
Let $\mu_0^{(t)}$ and $\sigma_0^{(t)}$ denote the mean and standard deviation of $\|\hat{\mathbf{z}}_0^{(t)}\|$ estimated from normal prompts. Then the empirical stable region is defined as:
\[
\mathcal{R}_0^{(t)} := \left[ \mu_0^{(t)} - \gamma \sigma_0^{(t)},\ \mu_0^{(t)} + \gamma \sigma_0^{(t)} \right],
\]
with the same confidence parameter $\gamma$ as before.
\end{definition}

This allows us to assess whether the reconstructed initial latent remains within a stable range. 
Specifically, we show that for normal prompts, the reconstructed $\hat{\mathbf{z}}_0^{(t)}$ remains within this empirical stability region across all timesteps.

\begin{theorem}[Normal $\hat{\mathbf{z}}_0$ Stability]
\label{thm:z0-stability}
Let $\hat{\mathbf{z}}_0^{(t)}$ be the reconstructed initial latent at step $t$ in DDIM, and let $\mathcal{R}_0^{(t)}$ be the stability region in Def. \ref{def:z0-stability-region}. Then, for normal generations, we have:
\[
\hat{\mathbf{z}}_0^{(t)} \in \mathcal{R}_0^{(t)}, \quad \forall t \in \{1, \dots, T\}.
\]
\end{theorem}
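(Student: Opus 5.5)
The plan is to mirror the argument for Theorem~\ref{thm:normal-stability}, reading the statement as a high-probability one about the scalar $\|\hat{\mathbf{z}}_0^{(t)}\|$ (the membership $\hat{\mathbf{z}}_0^{(t)} \in \mathcal{R}_0^{(t)}$ only makes sense for the norm). I would proceed in three steps: show that $\|\hat{\mathbf{z}}_0^{(t)}\|$ is a Lipschitz function of Gaussian randomness, so it concentrates; pass from the population moments to the empirical moments $(\mu_0^{(t)},\sigma_0^{(t)})$ of Def.~\ref{def:z0-stability-region}; and take a union bound over the $T$ timesteps.

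\textbf{Concentration of the norm.} For a normal prompt, DDIM is deterministic given the initial noise $\mathbf{z}_T \sim \mathcal{N}(0,\mathbf{I})$. Hence $\hat{\mathbf{z}}_0^{(t)} = \Phi_t(\mathbf{z}_T)$, where $\Phi_t$ composes the DDIM steps from $T$ down to $t$ with the affine reconstruction map
\[
\mathbf{z} \mapsto \bar{\alpha}_t^{-1/2}\bigl(\mathbf{z} - \sqrt{1-\bar{\alpha}_t}\,\epsilon_\theta(\mathbf{z},t,c)\bigr).
\]
I will assume, as the paper implicitly does for normal prompts, that $\epsilon_\theta(\cdot,t,c)$ is $L_t$-Lipschitz with moderate $L_t$; the over-guidance of Proposition~\ref{prop:mem-instability} is absent in this case. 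Then $\Phi_t$ is Lipschitz with constant
\[
K_t \le \bar{\alpha}_t^{-1/2}\bigl(1+\sqrt{1-\bar{\alpha}_t}\,L_t\bigr)\prod_{s>t}\ell_s,
\]
where $\ell_s$ is the Lipschitz constant of the DDIM step at $s$. So $\|\hat{\mathbf{z}}_0^{(t)}\|$ is a $K_t$-Lipschitz function of a standard Gaussian. Gaussian concentration then gives
\[
\Pr\bigl(\bigl|\|\hat{\mathbf{z}}_0^{(t)}\| - m_t\bigr| > u\bigr) \le 2e^{-u^2/2K_t^2},
\]
with $m_t$ the true mean. This makes the variable sub-Gaussian and supplies the assumption that Theorem~\ref{thm:normal-stability} only postulated. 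A randomly drawn normal prompt adds a second source of variability; I would simply fold it into the sub-Gaussian assumption, as in Theorem~\ref{thm:normal-stability}.

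\textbf{Calibration and union bound.} Next I compare the population interval $[m_t \pm \gamma s_t]$ (with $s_t$ the true standard deviation) to the empirical $[\mu_0^{(t)} \pm \gamma\sigma_0^{(t)}]$ built from $N$ reference generations. Sub-Gaussianity gives $|\mu_0^{(t)}-m_t| = O(K_t\sqrt{\log(1/\eta)/N})$ and a similar bound for $|\sigma_0^{(t)}-s_t|$ (sub-exponential concentration of squares). For moderate $N$ these are absorbed by slightly shrinking $\gamma$. Finally, a union bound over $t=1,\dots,T$ yields simultaneous containment with probability at least $1 - 2T e^{-c\gamma'^2} - \eta$, which is the claimed statement up to explicit constants.

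\textbf{Main obstacle.} The hard step is relating the deviation scale $K_t$ to the empirical width $\sigma_0^{(t)}$. Concentration bounds deviations in units of $K_t$, but the region is measured in units of the standard deviation, and $K_t$ could greatly exceed $s_t$. My first approach would avoid Lipschitz constants there and assume directly that the law of $\|\hat{\mathbf{z}}_0^{(t)}\|$ is sub-Gaussian with variance proxy $\kappa s_t^2$ for a bounded $\kappa$. The bound $\Pr(|X-m_t|>\gamma s_t)\le 2e^{-\gamma^2/2\kappa}$ then follows immediately. As a fallback I would use Chebyshev, giving a distribution-free $1/\gamma^2$ bound per timestep: weaker, but needing only finite variance. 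This is honest about the result being a statistical guarantee rather than a deterministic one.
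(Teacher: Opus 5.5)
Your skeleton is the same as the paper's: a sub-Gaussian tail bound for $\|\hat{\mathbf{z}}_0^{(t)}\|$ at each step, then a union bound over the $T$ steps, giving a failure probability of order $2Te^{-\gamma^2/2}$. The paper gets sub-Gaussianity by postulating it for $\epsilon_\theta(\mathbf{z}_t,t,c)$ and noting that $\hat{\mathbf{z}}_0^{(t)}$ is an affine function of $\mathbf{z}_t$ and $\epsilon_\theta$. It then writes the tail bound directly around the empirical $\mu_0^{(t)}$ in units of the empirical $\sigma_0^{(t)}$. That step silently makes both identifications you flag: variance proxy equal to the variance (your $\kappa=1$), and empirical moments equal to population moments.

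Your Gaussian-concentration route instead derives sub-Gaussianity from a Lipschitz assumption on $\epsilon_\theta$ together with the Gaussianity of $\mathbf{z}_T$. It also covers the norm directly, since $\|\Phi_t(\cdot)\|$ is $K_t$-Lipschitz in $\mathbf{z}_T$. The paper's affine-map remark, by contrast, implicitly needs $\mathbf{z}_t$ as well as $\epsilon_\theta$ to be sub-Gaussian, and passes from the vector to its norm without argument.

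But, as you observe, your route only controls deviations in units of $K_t$. On its own it gives containment in $[m_t\pm\gamma K_t]$, not in $\mathcal{R}_0^{(t)}$. The same issue affects your calibration step: its errors $O(K_t\sqrt{\log(1/\eta)/N})$ can be absorbed into $\gamma$ only once $K_t$ is replaced by $\sqrt{\kappa}\,s_t$. So what actually closes your argument is the variance-proxy assumption $\kappa s_t^2$, which is exactly the paper's implicit assumption. Your version buys explicit assumptions and an accounting of the finite reference set ($N=50$); the paper's buys brevity.

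One caveat applies to both. At the paper's settings $T=50$, $\gamma=3$, the union bound gives $2Te^{-\gamma^2/2}\approx 1.1$, which is vacuous, and so is $T/\gamma^2$ from your Chebyshev fallback. The simultaneous ``for all $t$'' claim therefore needs a larger $\gamma$; at $\gamma=3$ only the per-step guarantee is meaningful.
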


\begin{proof}[Sketch]
The DDIM formulation deterministically maps $\mathbf{z}_t$ and $\epsilon_\theta$ to $\hat{\mathbf{z}}_0^{(t)}$. For normal prompts, $\epsilon_\theta$ behaves regularly, leading to a stable and bounded reconstruction $\hat{\mathbf{z}}_0^{(t)}$. 
Detailed proofs can be found in Appendix. \ref{sec:proof}. 
\end{proof}

Empirically, we observe that $\hat{\mathbf{z}}_0^{(t)}$ for memorized prompts often lies distinctively outside the stability region $\mathcal{R}_0^{(t)}$ at intermediate timesteps, even when the final output appears visually indistinguishable from a normal natural image (see Fig.~\ref{fig:intro}). 
This confirms that the anomaly lies in the \textit{sampling dynamics} rather than the final state.
This indicates that DDIM can conceal underlying instability in its latent dynamics, revealing that memorization effects persist internally as trajectory deviations.

Crucially, this analysis of $\hat{\mathbf{z}}_0$ confirms that memorized trajectories in DDIM are effectively ``broken'' internally, driven by the similar excessive guidance gradients as in PNDM. 
Consequently, this internal instability inevitably forces the latent updates $\delta_t$ to deviate from their stability region $\mathcal{R}_{\delta}^{(t)}$ as well.
Thus, the unified concept of a latent stability region is sampler-agnostic: while $\hat{\mathbf{z}}_0$ serves as an analytical tool to uncover hidden instability, the latent update $\delta_t$ remains a universal and sufficient signature for identifying memorizations across both samplers.

\begin{table*}[t]
\centering
\caption{Memorization detection results with AUC, TPR@$1\%$FPR on SD 1.4. The best results are in \textbf{bold}.}
\label{tab:detect_main}
\begin{tabular}{lccccccc}
\toprule
\multicolumn{1}{c}{\multirow{2}{*}{Method}} & \multirow{2}{*}{Num of Seeds} & \multicolumn{2}{c}{First 3 Steps}      & \multicolumn{2}{c}{All Steps}          & \multicolumn{2}{c}{Avg. on Steps}          \\  \cline{3-8} 
\multicolumn{1}{c}{}                        &                               & AUC↑               & TPR@1\%FPR↑       & AUC↑               & TPR@1\%FPR↑       & AUC↑               & TPR@1\%FPR↑       \\ \midrule
\multirow{3}{*}{Ren et al.~\cite{ren2024unveiling}}                 & 1                             & \multicolumn{2}{c}{\multirow{3}{*}{-}} & 0.9660             & 0.6220            & \multicolumn{2}{c}{\multirow{3}{*}{-}} \\
                                            & 4                             & \multicolumn{2}{c}{}                   & 0.9383             & 0.6940            & \multicolumn{2}{c}{}                   \\
                                            & 8                             & \multicolumn{2}{c}{}                   & 0.9395             & 0.6980            & \multicolumn{2}{c}{}                   \\ \midrule
\multirow{3}{*}{Wen et al.~\cite{wen2024detecting}}                 & 1                             & \textbf{0.9880}             & 0.9160            & 0.9831             & \textbf{0.9140}   & 0.9859             & \textbf{0.9253}   \\
                                            & 4                             & 0.9955             & 0.9700            & 0.9959             & \textbf{0.9820}   & 0.9964             & \textbf{0.9775}   \\
                                            & 8                             & {0.9985}    & 0.9920            & 0.9977             & 0.9880            & 0.9982             & 0.9850            \\ \midrule
\multirow{3}{*}{Jeon et al.~\cite{jeon2025understanding}}                & 1                             & 0.9779             & 0.8480            & \multicolumn{2}{c}{\multirow{3}{*}{-}} & \multicolumn{2}{c}{\multirow{3}{*}{-}} \\
                                            & 4                             & \textbf{0.9977}             & \textbf{0.9820}   & \multicolumn{2}{c}{}                   & \multicolumn{2}{c}{}                   \\
                                            & 8                             & \textbf{0.9994}        & \textbf{0.9940}   & \multicolumn{2}{c}{}                   & \multicolumn{2}{c}{}                   \\ \midrule
\multirow{3}{*}{Ours}                       & 1                             & 0.9878             & \textbf{0.9220}   & \textbf{0.9858}    & 0.9040            & \textbf{0.9868}    & 0.9042            \\
                                            & 4                             & {0.9964}    & 0.9720            & \textbf{0.9983}    & 0.9800            & \textbf{0.9975}    & 0.9764            \\
                                            & 8                             & 0.9982             & 0.9900            & \textbf{0.9995}    & \textbf{0.9920}   & \textbf{0.9992}    & \textbf{0.9874} \\ \bottomrule
\end{tabular}
\end{table*}

\section{Detecting and Mitigating Memorization via Stability Regions}

\subsection{Detection via Instable Latent Updates}
Based on the empirical stability region in Def. \ref{def:stability-region}, we can detect memorization artifacts by monitoring deviations from the region during generation.
At each denoising step $t$, we compute the absolute Z-score of the latent update norm:
\[
s_t = \left|\frac{\|\delta_t\| - \mu_t}{\sigma_t}\right|.
\]
A high $s_t$ indicates an anomalous update, suggesting potential memorization. We summarize a sample's trajectory by the maximum absolute Z-score over the first $s$ steps\footnote{Note that taking the maximum Z-score across $s$ steps introduces a multiple testing problem, which results in an expected $\sqrt{\log s}$ shift in $S_\text{mem}$ in our multi-step detection scenario. However, since this systemetic shift applies to both memorized and normal samples, it does not affect the relative detection performance.}:
\[
S_{\text{mem}} = \max_{1 \leq t \leq s} s_t.
\]
This approach efficiently highlights sharp deviations from normal behavior and is applicable to any diffusion sampler. Notably, high-order solvers like PNDM further amplify such anomalies, enhancing detection sensitivity. 

Crucially, rather than serving merely as a post-hoc scoring tool for filtering memorized generations, this dynamically calibrated step-wise metric $s_t$ is intrinsically designed to act as a real-time monitor. The true value of this diagnostic signal lies in its ability to pinpoint the exact moment of instability, which serves as the fundamental prerequisite for our on-the-fly mitigation strategy.

\subsection{Mitigation via On-the-Fly Stability-Constrained Adaptive Sampling}

While monitoring $s_t$ effectively flags potential memorization, tailoring the mitigation intensity requires gauging the severity of the instability.
To this end, we incorporate the reconstructed initial latent $\hat{\mathbf{z}}_0$ as a semantic stability proxy, allowing us to differentiate between mild and strong memorization artifacts.

We thus propose a two-level adaptive mitigation strategy. 
To facilitate this, we define an additional empirical stability region $\mathcal{R}_z^{(t)}$ for the latent norm $\|\mathbf{z}_t\|$, analogous to $\mathcal{R}_{\delta}^{(t)}$ in Def. \ref{def:stability-region}.
Let $t^*$ be the first timestep (within the initial $k$ steps) where both the latent update Z-score $s_t$ and the reconstructed initial latent Z-score $s_0^{(t)}$ surpass a mild threshold $\tau_{\text{mild}}$. 
Based on the magnitude of $s_0^{(t^*)}$, we apply the following specific interventions:

\noindent
\textbf{1. Mild Memorization ($\tau_{\text{mild}} < s_0^{(t^*)} \leq \tau_{\text{strong}}$):}
\begin{itemize}
    \item For the first $k$ steps, if both $s_t$ and $s_0^{(t)}$ exceed $\tau_{\text{mild}}$, and $s_0^{(t)} \leq \tau_{\text{strong}}$, we rescale $\hat{\mathbf{z}}_0^{(t)}$ to the empirical mean $\mu_0^{(t)}$.
    \item Throughout the trajectory, we constrain $\|\mathbf{z}_t\|$ to remain within $\mathcal{R}_z^{(t)}$ only when $\|\delta_t\|$ exceeds $\mathcal{R}_{\delta}^{(t)}$.
\end{itemize}

\noindent
\textbf{2. Strong Memorization ($s_0^{(t^*)} > \tau_{\text{strong}}$):}
\begin{itemize}
    \item At every step, if $s_0^{(t)} > \tau_{\text{mild}}$, we rescale $\hat{\mathbf{z}}_0^{(t)}$ to $\mu_0^{(t)}$.
    \item Both $\|\mathbf{z}_t\|$ and $\|\delta_t\|$ are enforced to remain within their respective empirical stability regions.
\end{itemize}

This on-the-fly, trajectory-aware mitigation ensures that strong memorization is robustly suppressed, while mild cases are gently regularized to preserve diversity and semantics.
The mitigation process is visualized in Fig. \ref{fig:method}, where we show how the generation is constrained to remain within the empirical stability regions.
Complete algorithms are in Appendix \ref{sec:algorithm}.

\begin{table*}[t]
\centering
\caption{Memorization detection results with AUC, TPR@$1\%$FPR on SD 1.4 (num of seeds $=8$), using PNDM and DDIM. }
\label{tab:detection_results}
\begin{tabular}{lccccccc}
\toprule
\multicolumn{1}{c}{}                                 &                             & \multicolumn{2}{c}{First 3 Steps}                                                 & \multicolumn{2}{c}{All Steps}                                            & \multicolumn{2}{c}{Avg. on Steps}                                                     \\
\multicolumn{1}{c}{\multirow{-2}{*}{Method}}         & \multirow{-2}{*}{Scheduler} & AUC↑                                    & TPR@1\%FPR↑                             & AUC↑                           & TPR@1\%FPR↑                             & AUC↑                                    & TPR@1\%FPR↑                             \\ \midrule
                                                     & {DDIM} & \multicolumn{2}{c}{{}}                                       &\textbf{0.9444}  & \textbf{0.7240}           & \multicolumn{2}{c}{{}}                                       \\ 
\multirow{-2}{*}{Ren et al.~\cite{ren2024unveiling}}                         & {PNDM} & \multicolumn{2}{c}{\multirow{-2}{*}{{-}}}                    & {-0.0049} & {-0.0260}          & \multicolumn{2}{c}{\multirow{-2}{*}{{-}}}                    \\ \midrule
{}                              & {DDIM} & {0.9983}           & {0.9860}           & \textbf{0.9978}  & {0.9860}           & {0.9981}           & {0.9825}           \\ 
\multirow{-2}{*}{{Wen et al.~\cite{wen2024detecting}}}  & {PNDM} & {\textbf{+0.0002}} & {\textbf{+0.0060}} & {-0.0001} & {\textbf{+0.0020}} & {\textbf{+0.0001}} & {\textbf{+0.0024}} \\ \midrule
{}                              & {DDIM} & \textbf{0.9994}                                  & 0.9900                                  & \multicolumn{2}{c}{{}}                              & \multicolumn{2}{c}{{}}                                       \\ 
\multirow{-2}{*}{{Jeon et al.~\cite{jeon2025understanding}}} & {PNDM} & {0.0000}          & {\textbf{+0.0040}} & \multicolumn{2}{c}{\multirow{-2}{*}{{-}}}           & \multicolumn{2}{c}{\multirow{-2}{*}{{-}}}                    \\ \midrule
                                                     & {DDIM} & \textbf{0.9982}                         & 0.9900                                  & {0.9997}                & {0.9940}                         & {0.9993}           & 0.9904                                  \\ 
\multirow{-2}{*}{Ours}                               & {PNDM} & {-0.0002}          & {\textbf{+0.0020}} & \textbf{+0.0001}  & \textbf{+0.0020}           & {\textbf{+0.0002}} & {\textbf{+0.0017}}\\ \bottomrule
\end{tabular}
\end{table*}

\subsection{Advantages over Prior Work}

Our framework enables \textbf{on-the-fly trajectory correction}, fundamentally distinguishing it from prior inference-time approaches.
While existing methods~\cite{wen2024detecting,ren2024unveiling} can potentially detect memorization at early steps (e.g., via prediction magnitude), their mitigation paradigms remain distinctively costly. 
Specifically, once memorization is flagged, these methods typically require aborting the current process and applying iterative optimization on prompts or embeddings, followed by a complete \textbf{restart} of the generation. 
Even if detection happens at step $t=1$, the overhead of computing gradients for optimization and restarting the sampling loop is non-negligible.
In contrast, our approach acts as a \emph{streaming filter}: our detection mechanism enables instability monitoring at each step, and our mitigation strategy applies instantaneous projection to pull the latent state back into the stability region as soon as instability is detected, without interrupting the diffusion process.
This allows the generation to proceed seamlessly, enforcing safety constraints with virtually zero additional latency compared to standard inference.
Beyond computational efficiency, our framework offers a theoretical advantage: by grounding detection in numerical stability rather than heuristic magnitude thresholds, it provides a principled explanation for \textit{why} memorization leads to broken generations, unifying prior observations under a consistent dynamical perspective.

\section{Experiments}
\subsection{Memorization Detection}

\noindent
\textbf{Datasets and Models.}
Three datasets are used in this experiment, i.e., one memorized and two non-memorized. 
Following prior works, the memorized set contains $500$ prompts corresponding to known duplicated images in the LAION dataset~\cite{schuhmann2021laion,schuhmann2022laion} constructed by~\cite{webster2023reproducible}, which includes both exact and template verbatim samples. 
For non-memorized sets, the one used for evaluation contains a total of $500$ prompts sampled from the Lexica.art gallery, the COCO-2017 validation set caption, and generated by GPT-4~\cite{achiam2023gpt} following the construction in~\cite{wen2024detecting}.
Another non-memorized set is used for estimating the empirical stability region, which contains $50$ prompts sampled from the LAION-400M~\cite{schuhmann2021laion} dataset.
For detection, we consider pretrained SD 1.4, 1.5, 2.1, with PNDM~\cite{liu2022pseudonumericalmethodsdiffusion} and DDIM~\cite{song2022denoisingdiffusionimplicitmodels}. 
For SD 1.5, 2.1 results, please refer to Appendix~\ref{sec:more_detection_results_apdx}.

\noindent
\textbf{Metrics.}
Following prior works~\cite{wen2024detecting,ren2024unveiling}, we use two standard metrics, Area Under the Receiver Operating Characteristic Curve (AUC) and True Positive Rate at $1\%$ False Positive Rate (TPR@$1\%$FPR).

\noindent
\textbf{Baselines.}
We consider three representative baselines. 
Ren et al.~\cite{ren2024unveiling} uses the entropy of cross-attention maps as a signal, i.e., memorized prompts lead to more stable entropy.
Wen et al.~\cite{wen2024detecting} computes the L2 norm of the guidance $||\epsilon_\theta(z_t,t,c)-\epsilon_\theta(z_t, t, \phi)||_2$.
Jeon et al.~\cite{jeon2025understanding} extends Wen et al.'s approach by incorporating the Hessian of the guidance to account for the local curvature.

\noindent
\textbf{Results.}
As shown in Table~\ref{tab:detect_main}, our method, which monitors the Z-score of the latent update norm, consistently outperforms existing baselines in the vast majority of settings, achieving high AUC and TPR@$1\%$FPR, demonstrating its superior accuracy and reliability.
Such accurate per-step detection performance is crucial for mitigation. 
We also show in Appendix~\ref{sec:ref_prompt_dist_apdx} that our method is insensitive to the distribution and size of the reference set.

\noindent
\textbf{The amplifying effect of PNDM.}
While our method already yields high detection accuracy with standard schedulers like DDIM, PNDM can further enhance the signal contrast.
Due to its multi-step nature, PNDM accumulates the instability inherent to memorized prompts, thereby boosting the performance of methods sensitive to guidance dynamics, as shown in Table~\ref{tab:detection_results}.
In contrast, Ren et al. does not benefit from this, as its attention-entropy-based indicator is relatively independent of the solver's integration trajectory.

\begin{figure}[t]
\centering
\includegraphics[width=0.8\linewidth]{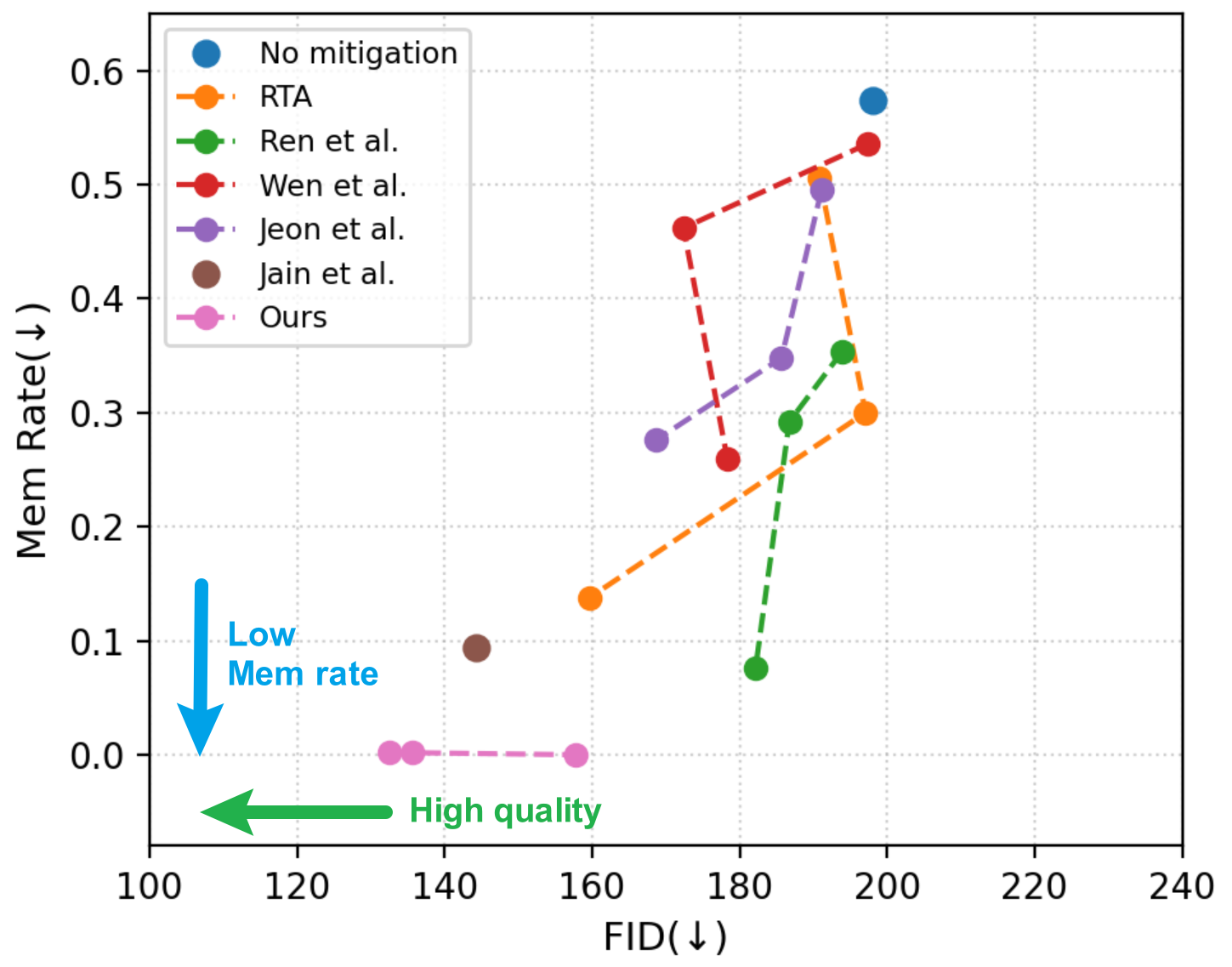}
\caption{Comparison of mitigations on SD 1.4. }
\vspace{-0.1in}
\label{fig:ablation_curve}
\end{figure}

\begin{figure*}[t]
\centering
\includegraphics[width=0.8\linewidth]{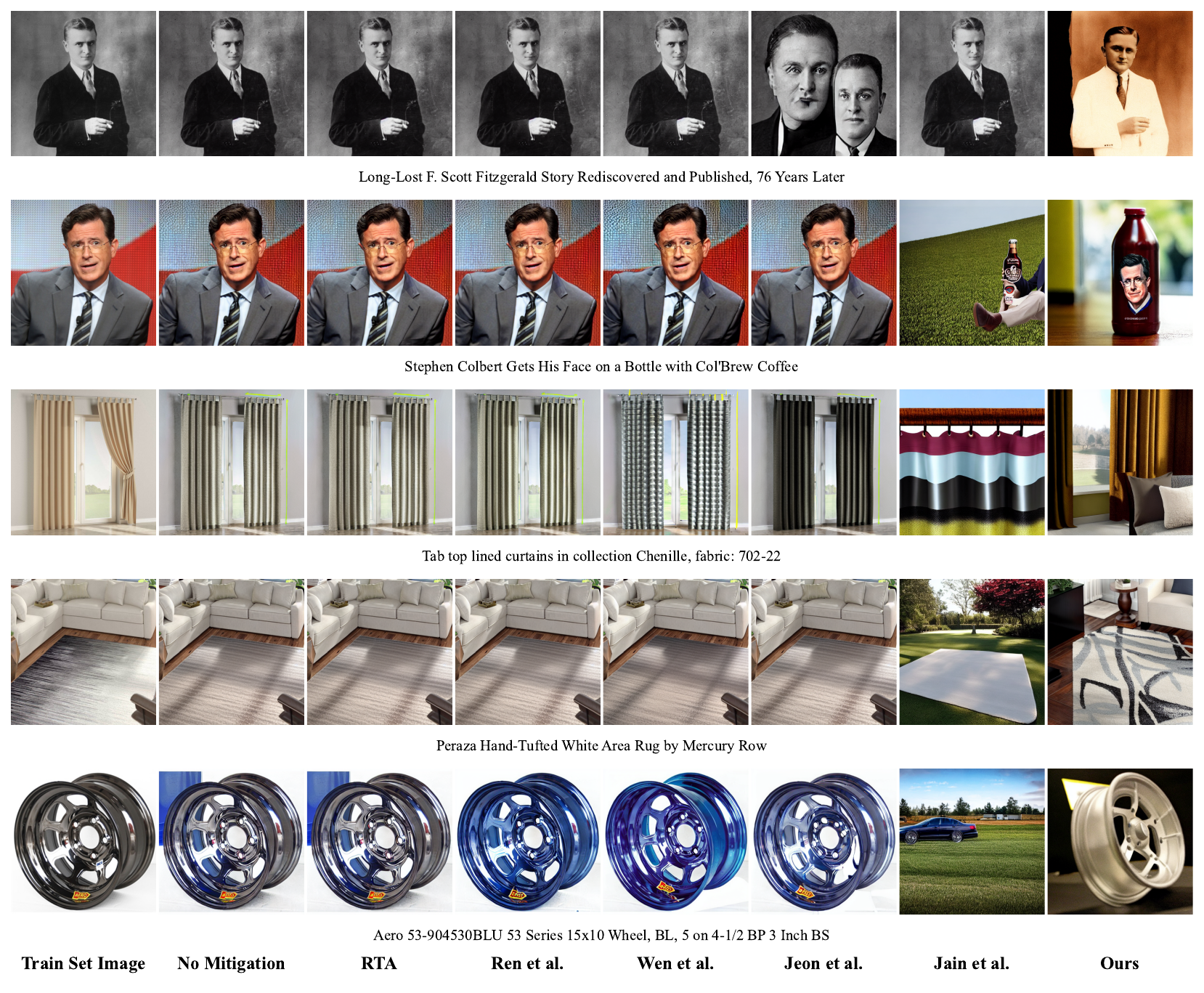}  
\caption{Qualitative results comparing the proposed approach with the baselines on SD 1.4.}
\label{fig:exp}
\end{figure*}

\begin{table}[t]
\centering
\caption{Inference time (s/image) of mitigation.}
\resizebox{\linewidth}{!}{
\begin{tabular}{lccccccc}
\toprule
Method                 & No Mitigation                & RTA                          & Wen et al.                   & Ren et al. & Jeon et al.                  & Jain et al.                  & Ours                         \\ \midrule
Inference Time (s) ↓ & {1.744} & \textbf{1.753} & {2.466} & {2.882} & {50.469} & {3.426} & \underline{1.757} \\ \bottomrule
\end{tabular}
}
\vspace{-0.1in}
\label{tab:time_comparison}
\end{table}

\subsection{Memorization Mitigation}

\noindent
\textbf{Datasets and Models. }
We evaluate mitigation on two scenarios using the DDIM scheduler~\cite{song2022denoisingdiffusionimplicitmodels}: pretrained SD 1.4 (using the same datasets as detection) and SD 1.4 finetuned on a LAION subset to simulate memorization~\cite{wen2024detecting}.
For practical evaluation, all online mitigation thresholds are calibrated to $1\%$ FPR on the LAION-400M reference set.
Due to space constraints, results for the finetuned SD are detailed in Appendix~\ref{sec:more_mitigation_results_apdx}.

\noindent
\textbf{Metrics. }
We use Self-Supervised Copy Detection (SSCD)~\cite{pizzi2022self} to detect whether the generated image is a memorized one, which measures the embedding similarity of two images with an SSL model.
The memorization rate (Mem Rate) evaluates the portion of memorized images generated with SSCD $>0.5$. 
CLIP Score~\cite{hessel2022clipscorereferencefreeevaluationmetric,radford2021learningtransferablevisualmodels} on both memorized and non-memorized sets is used to measure the semantic alignment between the generated image and the prompt.
Fréchet Inception Distance (FID)~\cite{heusel2018ganstrainedtimescaleupdate} is used to evaluate the overall quality and realism of the generated images. 
More details can be found in Appendix~\ref{sec:metrics_apdx}.

\noindent
\textbf{Baselines.}
We consider five mitigation baselines. 
Among them, Random Tokens Addition (RTA)~\cite{somepalli2023understanding} mitigates memorization by randomly adding tokens to the original input. 
Ren et al.~\cite{ren2024unveiling} suppresses attention weights corresponding to tokens that trigger memorization. 
Wen et al.~\cite{wen2024detecting} performs optimization on the prompt embedding to steer it away from memorized regions in the embedding space. 
Jeon et al.~\cite{jeon2025understanding} enhances Wen et al.'s method by considering the local curvature while optimizing the initial latent.
Jain et al.~\cite{jain2025classifier} introduces a dynamic guidance scale that pushes the generation trajectory away from memorized content, particularly in the early steps.
As summarized in Table \ref{tab:baseline_comparison}, our method is the only one that adaptively mitigates memorization on the fly with detection. 

\noindent
\textbf{Quantitative Results.}
Fig.~\ref{fig:ablation_curve} visualizes the trade-off between Mem Rate and FID. Detailed results are in Appendix~\ref{sec:more_mitigation_results_apdx}.
Most baselines fail to reach the Pareto optimal region.
Optimization-based methods (Wen et al., Jeon et al.) cluster in the top region with high Mem Rates, as their initial-step mitigation is insufficient to prevent memorization from resurfacing.
Conversely, heuristic methods (RTA and Ren et al.), while reducing Mem Rate further, drift significantly towards the high-FID region (right side), since their coarse-grained interventions disrupt memorization at the cost of degrading quality.
In contrast, our method dominates the bottom-left Pareto front, achieving a \textbf{$0$ Mem Rate} with the lowest FID.
Overall, we observe a positive correlation where mitigation improves quality.
This reflects that memorization is essentially a form of mode collapse. 
Our adaptive mitigation not only eradicates duplicates on-the-fly with negligible computational overhead ($\approx0.01$s/img, see Table~\ref{tab:time_comparison}), but also restores diverse generation, thereby improving fidelity.

\noindent
\textbf{Visualizations.}
Quantitative metrics alone can be deceptive.
First, high CLIP Scores often mask mitigation failures by rewarding the high-quality generation of memorized replicas.
Second, a low Mem Rate does not guarantee success, as it may simply reflect the generation of unrelated content (over-mitigation), characterized by a drastic drop in CLIP Score.
Therefore, visual inspection in Fig.~\ref{fig:exp} is essential.
The results reveal two main failure modes in baselines: either retaining memorized content (most baselines) or suffering from excessive semantic deviation (Jain et al.).
Our method strikes the optimal balance, effectively eliminating memorized content while preserving high semantic fidelity.

\begin{figure}[t]
\centering
\includegraphics[width=\linewidth]{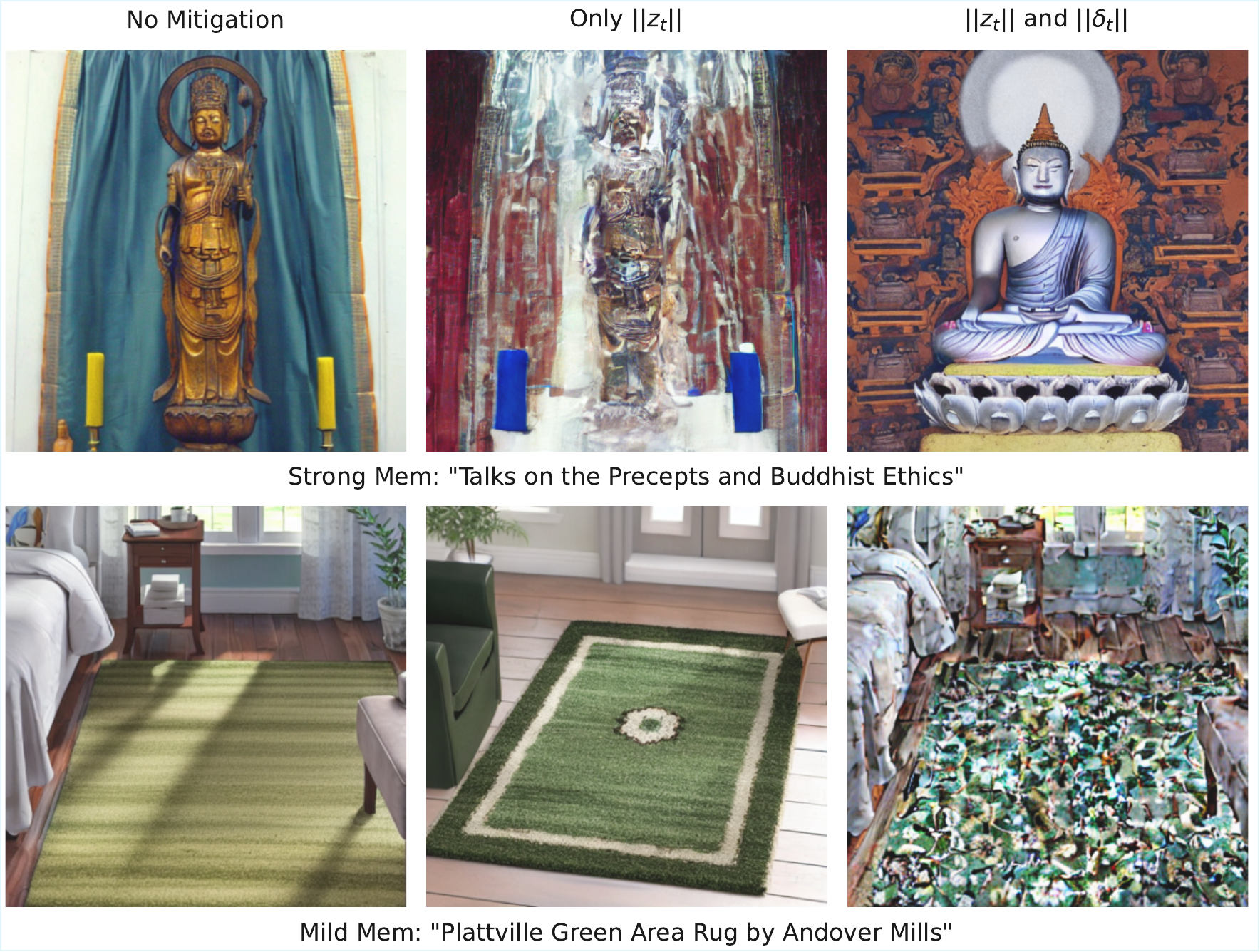} 
\caption{Ablation study of $\|\mathbf{z}_t\|$ and $\|\delta_t\|$ constraints.}
\label{fig:mitigation_ablation}
\end{figure}

\noindent
\textbf{Ablation Study on Stability Constraints.}
We further analyze the effect of different stability constraints on mitigating memorization, as shown in Fig.~\ref{fig:mitigation_ablation}. 
Three types of constraints are considered in our mitigation: the reconstructed latent norm $\|\hat{\mathbf{z}}_0^{(t)}\|$, the latent norm $\|\mathbf{z}_t\|$, and the latent update norm $\|\delta_t\|$. The $\|\hat{\mathbf{z}}_0^{(t)}\|$ constraint is consistently applied as a baseline for stabilizing intermediate reconstructions in both mild and strong memorization cases.

For \textit{mild} memorization, adding the $\|\mathbf{z}_t\|$ constraint effectively removes memorized content while preserving visual quality. 
However, when jointly enforcing both $\|\mathbf{z}_t\|$ and $\|\delta_t\|$, the generation quality significantly deteriorates, and traces of memorization even reappear. This is possibly due to over-constrained dynamics that disrupt the denoising trajectory. 
For \textit{strong} memorization, applying only the $\|\mathbf{z}_t\|$ constraint is insufficient, as it tends to preserve structural patterns of the memorized image and results in poor image quality. In contrast, jointly enforcing both $\|\mathbf{z}_t\|$ and $\|\delta_t\|$ constraints successfully removes memorization and restores high-quality outputs.

These observations highlight the importance of adaptively selecting constraint combinations based on the severity of memorization. Our multi-constraint design enables effective mitigation across different scenarios.

\noindent
\textbf{Case Study.}
Fig.~\ref{fig:case_study} underscores the necessity of on-the-fly mitigation.
Static methods like RTA and Wen et al. apply constraints only at the initialization stage (via token addition or embedding optimization).
As the visualization reveals, these early constraints often fail to persist, allowing the denoising trajectory to drift back towards memorized attractors in subsequent steps.
In contrast, our framework monitors the process in real-time and actively intercepts these drifts, adaptively mitigating whenever memorization signals resurface to ensure a unique generation.
Additional cases are provided in Appendix~\ref{sec:more_case_study_apdx}.

\begin{figure}[t]
\centering
\includegraphics[width=\linewidth]{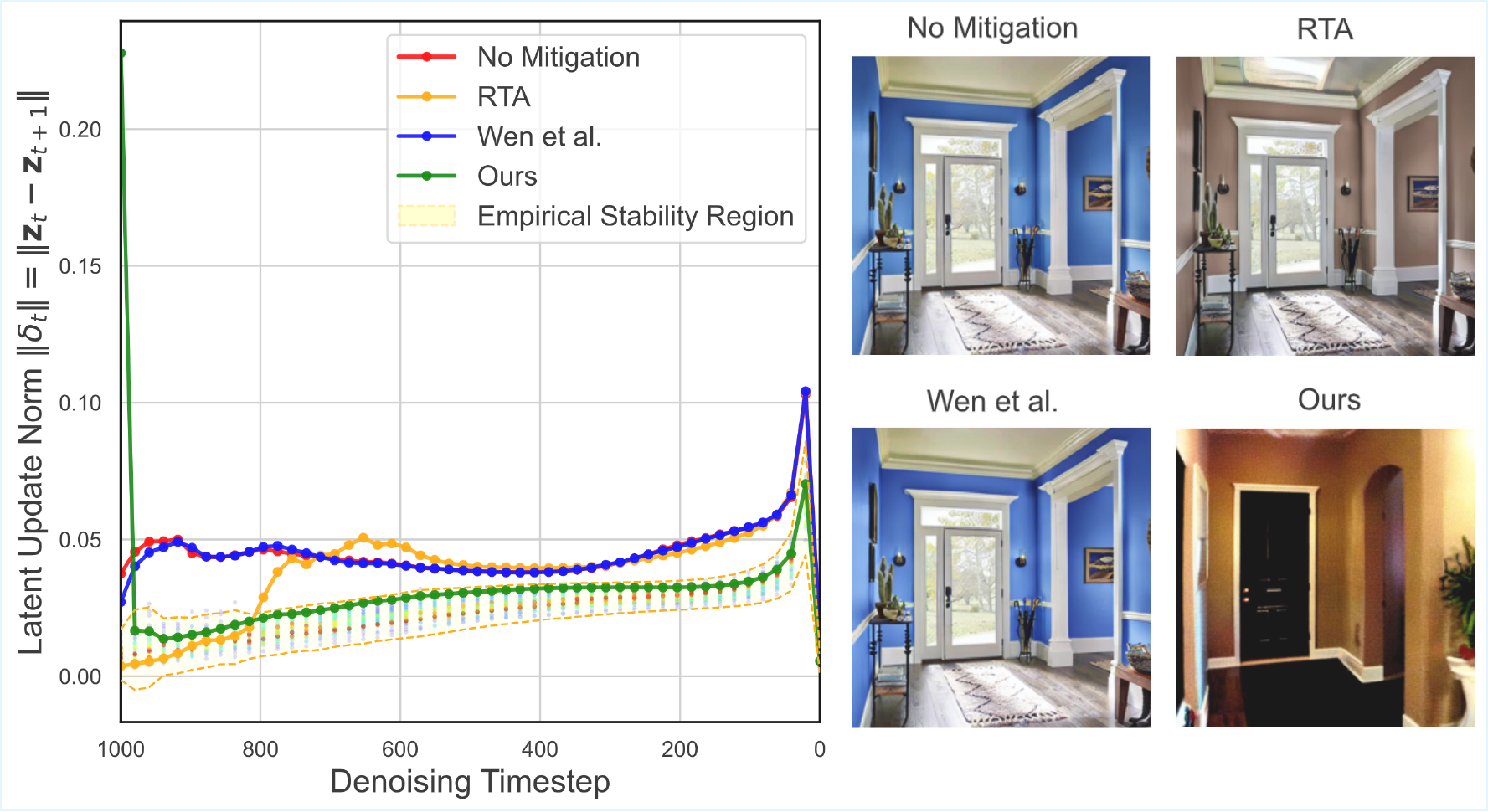}
\caption{Comparison of latent update trajectories and generated images on SD 1.4 with memorized prompts.}
\label{fig:case_study}
\end{figure}

\section{Conclusion}
In this work, we uncovered a fundamental link between memorization and numerical instability in diffusion models. 
By analyzing the latent dynamics, we showed that memorized prompts drive the generation trajectory out of the \textit{empirical stability regions} characteristic of normal samples, which is a phenomenon unified across different samplers (e.g., PNDM and DDIM). 
Guided by this theoretical insight, we proposed a real-time framework for memorization detection and mitigation. 
Our method monitors stability deviations step-by-step and applies adaptive trajectory corrections on the fly, eliminating the need for costly re-generation. 
Extensive experiments confirm that our approach not only achieves superior detection accuracy and complete memorization suppression but also preserves image quality with negligible computational overhead. 
We believe this stability-centric perspective offers a new, principled direction for building trustworthy generative models.

\section*{Acknowledgements}
We would like to thank the anonymous reviewers for their insightful comments that helped improve the quality of the paper. This work was supported in part by the National Natural Science Foundation of China (62472096, 62502157). 
Min Yang is a faculty of Shanghai Pudong Research Institute of Cryptology, and Engineering Research Center of Cyber Security Auditing and Monitoring, Ministry of Education, China.
Mi Zhang and Min Yang are the corresponding authors. 



\bibliographystyle{ACM-Reference-Format}
\bibliography{main}

\appendix
\section{Detailed Proofs}  \label{sec:proof}

\setcounter{theorem}{0}

In this section, we provide detailed proofs for Theorem \ref{thm:normal-stability} and \ref{thm:z0-stability} presented in the main text.

\begin{theorem}[Normal Trajectories Stability]
Let $\delta_t$ denote the latent updates of normal prompts, and let $\mathcal{R}_{\delta}^{(t)}$ be the empirical stability region defined in Def. \ref{def:stability-region}. Then with high probability, we have:
\[
\|\delta_t\| \in \mathcal{R}_{\delta}^{(t)}, \quad \forall t \in \{1, \dots, T\}.
\]
In other words, latent trajectories from non-memorized prompts tend to stay within their corresponding empirical stability region throughout the generation.
\end{theorem}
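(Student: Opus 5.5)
The argument is a concentration argument applied to the norm of the update at each timestep, followed by a union bound over the $T$ steps. For a fixed timestep $t$, treat a normal generation as a random draw over prompts $c$ from the reference distribution and initial noise $\mathbf{z}_T\sim\mathcal{N}(0,I)$. The quantity $X_t:=\|\delta_t\|$ is then a real random variable with population mean $m_t$ and standard deviation $s_t$. The one modelling assumption, as in the proof sketch, is that $X_t$ is sub-Gaussian with variance proxy comparable to $s_t^2$.

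This assumption can be motivated structurally. For DDIM and PNDM the update is an affine combination $\delta_t = a_t\mathbf{z}_{t+1} + b_t\,\epsilon^{\text{CFG}}_\theta(\cdot)$ (for AB4, a fixed linear combination of the last four predictions). For normal prompts the network output is Lipschitz in the Gaussian input, since the guidance term is not blown up. The norm is 1-Lipschitz, so Gaussian concentration of Lipschitz functions (Tsirelson–Ibragimov–Sudakov) gives sub-Gaussian tails for $X_t$.

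The steps are as follows.

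First, show that $\mu_t,\sigma_t$ estimated from $n$ reference trajectories are close to $m_t,s_t$. Hoeffding's inequality in its sub-Gaussian form gives $|\mu_t-m_t|\le \eta$ with probability $1-2e^{-n\eta^2/(2K s_t^2)}$. A similar bound on the sample variance, via sub-exponential concentration of $X_t^2$, gives $\sigma_t\ge (1-\eta')s_t$.

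Second, for a fresh normal trajectory and fixed $t$, bound
\[
\Pr\bigl(X_t\notin\mathcal{R}^{(t)}_\delta\bigr)\le \Pr\bigl(|X_t-m_t|>\gamma\sigma_t-|\mu_t-m_t|\bigr)\le 2\exp\!\Big(-\tfrac{(\gamma(1-\eta')s_t-\eta)^2}{2Ks_t^2}\Big).
\]
Chebyshev's inequality alone already gives $\le 1/\gamma^2$ without any tail assumption, which can serve as a fallback statement.

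Third, take a union bound over $t=1,\dots,T$, obtaining failure probability at most $2T e^{-c\gamma^2}$ plus the estimation error. This is small once $\gamma\gtrsim\sqrt{\log T}$, consistent with the footnote on the $\sqrt{\log s}$ shift.

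The main obstacle is justifying the sub-Gaussian assumption with a variance proxy tied to $s_t$ uniformly in $t$. The network's Lipschitz constant in $\mathbf{z}$ is not controlled, and the prompt distribution contributes non-Gaussian variability. I would state this as an explicit hypothesis on normal prompts: a bounded Lipschitz constant $L_t$ of the map $\mathbf{z}_T\mapsto\delta_t$, together with bounded prompt-induced variance. This is exactly what distinguishes normal prompts from the memorized ones of Proposition \ref{prop:mem-instability}. The theorem is then proved conditionally on that hypothesis, with the Chebyshev bound as the assumption-free version.
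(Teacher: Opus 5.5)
Your argument is essentially the paper's: a per-step sub-Gaussian tail bound on $\|\delta_t\|$, followed by a union bound over the $T$ steps, giving a failure probability of order $2Te^{-c\gamma^2}$. Your extra care is substantive rather than cosmetic---the paper plugs the sample $\mu_t,\sigma_t$ straight into $2\exp(-\gamma^2/2)$ as though they were the true mean and variance proxy, and its bound $2Te^{-\gamma^2/2}\approx 1.1$ is vacuous at its own $T=50$, $\gamma=3$, which your estimation step, constant $K$, and $\gamma\gtrsim\sqrt{\log T}$ condition address; note only that your Chebyshev fallback gives a per-step guarantee alone, since its union bound $T/\gamma^2$ is likewise vacuous there.
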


\begin{proof}[Sketch]
The empirical region $\mathcal{R}_{\delta}^{(t)}$ is constructed from the empirical mean $\mu_t$ and std. $\sigma_t$ over $\|\delta_t\|$ sampled from the training set. Under the assumption that these updates follow a sub-Gaussian distribution \cite{ho2020denoising,song2020score}, concentration bounds ensure that most values lie within $[\mu_t \pm \gamma \sigma_t]$. Thus, the definition of $\mathcal{R}_{\delta}^{(t)}$ captures the high-probability support of normal trajectories.
\end{proof}

\begin{proof}
Let $\delta_t := \mathbf{z}_t - \mathbf{z}_{t+1}$ denote the latent update at step $t$ for a normal prompt. By the formulation of diffusion sampling, the update $\delta_t$ is determined by the denoising network $f_\theta$ and the scheduler (e.g., PNDM, DDIM, taking one-step Euler method as an example here for simplicity):

\[
\mathbf{z}_{t+1} = \mathbf{z}_t - \Delta t \cdot f_\theta(\mathbf{z}_t, t, c)
\]
\[
\Rightarrow \delta_t = \mathbf{z}_t - \mathbf{z}_{t+1} = \Delta t \cdot f_\theta(\mathbf{z}_t, t, c).
\]

For normal prompts, the output of $f_\theta$ is governed by the learned distribution of the training data and is assumed to be sub-Gaussian~\cite{ho2020denoising,song2020score}. That is, for each $t$, the random variable $\|\delta_t\|$ satisfies the sub-Gaussian tail bound:
\[
\mathbb{P}\left( |\|\delta_t\| - \mu_t| \geq \gamma \sigma_t \right) \leq 2 \exp\left( -\frac{\gamma^2}{2} \right),
\]
where $\mu_t$ and $\sigma_t$ are the empirical mean and standard deviation of $\|\delta_t\|$ over normal prompts at step $t$, and $\gamma$ is a positive constant.

Therefore, for a sufficiently large $\gamma$ (e.g., $\gamma=3$), the probability that $\|\delta_t\|$ falls outside the interval $[\mu_t - \gamma \sigma_t,\, \mu_t + \gamma \sigma_t]$ is exponentially small. By the union bound over all $T$ steps, the probability that any $\|\delta_t\|$ deviates from $\mathcal{R}_{\delta}^{(t)}$ is at most $2T \exp(-\gamma^2/2)$, which remains small for moderate $T$ and $\gamma$.

Thus, with high probability, for all $t \in \{1, \dots, T\}$,
\[
\|\delta_t\| \in \mathcal{R}_{\delta}^{(t)} = [\mu_t - \gamma \sigma_t,\, \mu_t + \gamma \sigma_t]
\]
as claimed.

This completes the proof that latent trajectories from non-memorized prompts tend to stay within their corresponding empirical stability region throughout the generation process.
\end{proof}

\noindent
\textbf{Remark on PNDM and Multi-Step Solvers:}
The above proof is based on the single-step update formulation. In practice, multi-step solvers such as PNDM (AB4) \cite{liu2022pseudonumericalmethodsdiffusion} compute the latent update $\delta_t$ as a weighted combination of denoising predictions from several previous steps:
\[
\delta_t = \Delta t \cdot \sum_{i=0}^{3} b_i f_\theta(\mathbf{z}_{t-i}, t-i, c)
\]
This "transfer part" introduces momentum and extrapolation effects, which can amplify local irregularities in the latent trajectory. However, as long as the denoising outputs $f_\theta$ for normal prompts remain sub-Gaussian and bounded, the resulting $\delta_t$ still concentrates within the empirical stability region $\mathcal{R}_{\delta}^{(t)}$ with high probability. For memorized prompts, the multi-step combination may further exaggerate deviations, making instability more detectable.

\begin{proposition}[Instability of Memorized Trajectories]
Let $c_{mem}$ be a memorized prompt where the conditional noise prediction norm significantly dominates the unconditional one, such that $\|\epsilon_\theta(\mathbf{z}_t, t, c_{mem})\| \ge \beta \|\epsilon_\theta(\mathbf{z}_t, t, \phi)\|$ for a large factor $\beta > 1$.
With guidance scale $w$, the magnitude of the latent update $\|\delta_t\|$ for the memorized prompt satisfies:
\[
\|\delta_t\|_{mem} \approx w \cdot C \cdot \|\epsilon_\theta(\mathbf{z}_t, t, c_{mem})\|,
\]
where $C$ is a constant related to the step size. Consequently, due to the excessively large gradient norm $\|\epsilon_\theta(\mathbf{z}_t, t, c_{mem})\|$ (large $\beta$), the resulting update $\|\delta_t\|_{mem}$ exceeds the upper bound of the stability region $\mathcal{R}_{\delta}^{(t)} = [\mu_t \pm \gamma \sigma_t]$ calibrated on normal samples, causing the trajectory to diverge from the stable manifold.
\end{proposition}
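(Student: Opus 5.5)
The plan is a direct perturbative estimate on the CFG formula (Eq.~\ref{eq:cfg}), followed by a comparison with the calibrated region of Def.~\ref{def:stability-region}.

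\textbf{Step 1: bound the CFG output.} Write
\[
\epsilon_\theta^{\text{CFG}} = w\,\epsilon_\theta(\mathbf{z}_t,t,c_{mem}) + (1-w)\,\epsilon_\theta(\mathbf{z}_t,t,\phi).
\]
By the triangle inequality and the hypothesis $\|\epsilon_\theta(\mathbf{z}_t,t,\phi)\|\le \beta^{-1}\|\epsilon_\theta(\mathbf{z}_t,t,c_{mem})\|$, this gives the two-sided bound
\[
\Bigl(w - \tfrac{|w-1|}{\beta}\Bigr)\|\epsilon_\theta(\mathbf{z}_t,t,c_{mem})\| \le \|\epsilon_\theta^{\text{CFG}}\| \le \Bigl(w + \tfrac{|w-1|}{\beta}\Bigr)\|\epsilon_\theta(\mathbf{z}_t,t,c_{mem})\|.
\]
So $\|\epsilon_\theta^{\text{CFG}}\| = w\|\epsilon_\theta(c_{mem})\|\bigl(1+O(1/\beta)\bigr)$, which makes the ``$\approx$'' in the statement precise as a relative error of order $1/\beta$.

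\textbf{Step 2: pass to the latent update.} Use the single-step Euler form from the proof of Theorem~\ref{thm:normal-stability}, $\delta_t=\Delta t\cdot\epsilon_\theta^{\text{CFG}}$. This gives $C=\Delta t$. For DDIM, $C$ becomes the scheduler coefficient multiplying $\epsilon_\theta$ in $\mathbf{z}_{t-1}-\mathbf{z}_t$, and the $\mathbf{z}_t$-rescaling term contributes an additive part $O(\|\mathbf{z}_t\|)$. For PNDM, $\delta_t=\Delta t\sum_i b_i f_\theta(\cdot)$. If the memorized regime persists over the four history steps, every term carries the same $w\beta$-type amplification, so $C=\Delta t\,|\sum_i b_i|$ up to corrections, as in the Remark following Theorem~\ref{thm:normal-stability}.

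\textbf{Step 3: compare with the stability region.} For normal prompts, the same computation shows $\mu_t+\gamma\sigma_t \le C\,(w M_t + |w-1| m_t)$ plus lower-order terms. Here $M_t$ and $m_t$ are typical conditional and unconditional prediction norms on the reference set; they are controlled by the sub-Gaussian assumption used in Theorem~\ref{thm:normal-stability}. It then suffices that $w\,C\,\|\epsilon_\theta(c_{mem})\|(1-O(1/\beta))$ exceed this quantity. This holds once $\|\epsilon_\theta(c_{mem})\|$ exceeds $M_t+\gamma\sigma_t/(wC)$ by a margin.

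\textbf{Main obstacle.} The hypothesis only constrains the ratio of conditional to unconditional norms, not the absolute size of $\|\epsilon_\theta(c_{mem})\|$. A large $\beta$ could come from a small unconditional norm rather than a large conditional one. Since the unconditional prediction is prompt-independent and approximately the same for normal prompts, I would first add the explicit assumption that $\|\epsilon_\theta(\mathbf{z}_t,t,\phi)\|$ is bounded below by its reference-set level, say $m_t$. Then $\|\epsilon_\theta(c_{mem})\|\ge\beta m_t$, and for $\beta$ large this dominates $M_t+\gamma\sigma_t/(wC)$. That closes the argument, giving the threshold $\beta > (M_t + \gamma\sigma_t/(wC))/m_t$ up to $O(1/\beta)$ corrections. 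I would state the result in this quantitative form and read the ``consequently'' clause of the proposition under that assumption.
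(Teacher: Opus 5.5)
Your Steps 1--2 follow the paper's proof. The paper writes $\epsilon_\theta^{\text{CFG}}=w\,\epsilon_\theta(\mathbf{z}_t,t,c_{mem})+(1-w)\,\epsilon_\theta(\mathbf{z}_t,t,\phi)$ and applies the reverse triangle inequality for $w>1$. Substituting the ratio hypothesis gives $\|\epsilon_\theta^{\text{CFG}}\|\ge(w-\tfrac{w-1}{\beta})\|\epsilon_\theta(\mathbf{z}_t,t,c_{mem})\|$; it then lets $\beta\to\infty$ and passes to $\delta_t=\Delta t\,\epsilon_\theta^{\text{CFG}}$, with $C$ absorbing the solver coefficients. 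Your matching upper bound is what actually justifies a two-sided ``$\approx$''. The paper proves only the lower bound, which is the direction needed for leaving the region. You also flag the $\mathbf{z}_t$-term in the DDIM update, which the paper ignores.

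The real difference is in how you close the comparison with $\mu_t+\gamma\sigma_t$. The paper does not derive an absolute bound from $\beta$. It separately asserts, as a consequence of overfitting, that $\|\epsilon_\theta(\mathbf{z}_t,t,c_{mem})\|\gg\|\epsilon_\theta(\mathbf{z}_t,t,c_{norm})\|$, so the ratio hypothesis only serves to discard the unconditional term. You instead observe that a ratio cannot bound an absolute norm, and you assume the unconditional norm is not anomalously small. That assumption is well motivated: $\epsilon_\theta(\cdot,\phi)$ is prompt-independent, and at the first step it depends only on the initial noise. It makes large $\beta$ the operative quantity, as the statement's wording suggests, and yields an explicit threshold. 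Both routes need one assumption beyond the stated hypothesis. The paper's is shorter; yours is explicit and quantitative.

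Tidy the bookkeeping in Step 3. With $M_t,m_t$ the reference means, your upper triangle bound gives $\mu_t\le C\,(wM_t+(w-1)m_t)$. The term $\gamma\sigma_t$ must then be added explicitly, not absorbed into ``lower-order terms''. Combine this with $\|\delta_t\|_{mem}\ge C\,(w-\tfrac{w-1}{\beta})\,\beta m_t=C\,(w\beta-(w-1))\,m_t$. The sufficient condition is then $\beta>\tfrac{M_t}{m_t}+\tfrac{2(w-1)}{w}+\tfrac{\gamma\sigma_t}{wCm_t}$. So your threshold omits an additive $O(1)$ term, not merely an $O(1/\beta)$ correction; this is harmless for the qualitative claim.

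Likewise, for PNDM, replacing $\|\sum_i b_i f_\theta(\cdot)\|$ by $|\sum_i b_i|\,\|f_\theta\|$ needs the history predictions to be nearly aligned, not merely all amplified. The paper's remark is equally heuristic there, so this is not a gap relative to it.
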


\begin{proof}
By definition of classifier-free guidance (CFG), the final noise prediction $\epsilon_\theta^{\text{CFG}}$ is given by:
\[
\epsilon_\theta^{\text{CFG}}(\mathbf{z}_t, t, c) = \epsilon_\theta(\mathbf{z}_t, t, \phi) + w \cdot \left(\epsilon_\theta(\mathbf{z}_t, t, c) - \epsilon_\theta(\mathbf{z}_t, t, \phi)\right).
\]
Rearranging terms, we can express it as a weighted sum:
\[
\epsilon_\theta^{\text{CFG}}(\mathbf{z}_t, t, c) = w \cdot \epsilon_\theta(\mathbf{z}_t, t, c) + (1-w) \cdot \epsilon_\theta(\mathbf{z}_t, t, \phi).
\]
Now, we consider the magnitude of this vector. Using the triangle inequality properties ($|\|a\| - \|b\|| \le \|a-b\|$), we can bound the norm from below:
\begin{align*}
\|\epsilon_\theta^{\text{CFG}}\| &= \| w \cdot \epsilon_\theta(\mathbf{z}_t, t, c) - (w-1) \cdot \epsilon_\theta(\mathbf{z}_t, t, \phi) \| \\
&\ge w \|\epsilon_\theta(\mathbf{z}_t, t, c)\| - (w-1) \|\epsilon_\theta(\mathbf{z}_t, t, \phi)\|,
\end{align*}
assuming $w > 1$.

We invoke the memorization assumption: for a memorized prompt $c_{mem}$, the model is overfitted, resulting in gradients (noise predictions) that are significantly sharper than the unconditional priors. Specifically, we assume $\|\epsilon_\theta(\mathbf{z}_t, t, c_{mem})\| \ge \beta \|\epsilon_\theta(\mathbf{z}_t, t, \phi)\|$ for some large factor $\beta > 1$.
Substituting $\|\epsilon_\theta(\mathbf{z}_t, t, \phi)\| \le \frac{1}{\beta} \|\epsilon_\theta(\mathbf{z}_t, t, c_{mem})\|$ into the lower bound:
\begin{align*}
\|\epsilon_\theta^{\text{CFG}}\| &\ge w \|\epsilon_\theta(\mathbf{z}_t, t, c_{mem})\| - (w-1) \frac{1}{\beta} \|\epsilon_\theta(\mathbf{z}_t, t, c_{mem})\| \\
&= \left( w - \frac{w-1}{\beta} \right) \|\epsilon_\theta(\mathbf{z}_t, t, c_{mem})\|.
\end{align*}
For memorization, $\beta$ is large (i.e., the conditional signal overwhelmingly dominates the unconditional noise). As $\beta \to \infty$, the term $\frac{w-1}{\beta} \to 0$, leading to the approximation:
\[
\|\epsilon_\theta^{\text{CFG}}\| \approx w \|\epsilon_\theta(\mathbf{z}_t, t, c_{mem})\|.
\]
Next, relating this to the latent update $\delta_t$. For a generic solver step, the update is proportional to the noise prediction:
\[
\delta_t = \mathbf{z}_t - \mathbf{z}_{t+1} = \Delta t \cdot \epsilon_\theta^{\text{CFG}}. 
\]
Thus, the magnitude $\|\delta_t\|$ is:
\[
\|\delta_t\|_{mem} = \Delta t \cdot \|\epsilon_\theta^{\text{CFG}}\| \approx \Delta t \cdot w \cdot \|\epsilon_\theta(\mathbf{z}_t, t, c_{mem})\|.
\]
Note that for multi-step solvers like PNDM, $\delta_t$ is a linear combination of past $\epsilon$'s, but the dominant term is still proportional to the current scale, so we use a constant $C$ in the proposition statement to subsume $\Delta t$ and solver coefficients.

Finally, we consider the stability region $\mathcal{R}_{\delta}^{(t)}$. Its upper bound, $B_{max} = \mu_t + \gamma \sigma_t$, is determined by the statistics of \textit{normal} prompts. 
For normal prompts, the conditional gradient $\|\epsilon_\theta(\mathbf{z}_t, t, c_{norm})\|$ is bounded, resulting in a stable reference distribution for $\delta_t$.

In contrast, for memorized prompts, the overfitting leads to an intrinsically much larger gradient norm: $\|\epsilon_\theta(\mathbf{z}_t, t, c_{mem})\| \gg \|\epsilon_\theta(\mathbf{z}_t, t, c_{norm})\|$.
Although the same scale $w$ is applied, it amplifies this pre-existing disparity. 
Since the stability threshold is calibrated on the smaller $\|\epsilon_{norm}\|$, the trajectory driven by the significantly larger $\|\epsilon_{mem}\|$ will inevitably exceed this threshold.
Thus, the violation of the stability region is primarily driven by the sharpness of the memorized minimum (large $\beta$), which is then scaled up by $w$ to cross the boundary defined by normal statistics.
\end{proof}

\begin{theorem}[Normal $\hat{\mathbf{z}}_0$ Stability]
Let $\hat{\mathbf{z}}_0^{(t)}$ be the reconstructed initial latent at step $t$ in DDIM, and let $\mathcal{R}_0^{(t)}$ be the stability region in Def. \ref{def:z0-stability-region}. Then, for normal generations, we have:
\[
\hat{\mathbf{z}}_0^{(t)} \in \mathcal{R}_0^{(t)}, \quad \forall t \in \{1, \dots, T\}.
\]
\end{theorem}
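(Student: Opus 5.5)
The argument will follow the template of the proof of Theorem~\ref{thm:normal-stability}, with $\|\hat{\mathbf{z}}_0^{(t)}\|$ in place of $\|\delta_t\|$. The statement is read as $\|\hat{\mathbf{z}}_0^{(t)}\| \in \mathcal{R}_0^{(t)}$ for all $t$, holding with high probability over normal prompts and initial noise.

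\textbf{Step 1: decomposition.} Using the DDIM reconstruction formula, write
\[
\hat{\mathbf{z}}_0^{(t)} = a_t \mathbf{z}_t - b_t\, \epsilon_\theta(\mathbf{z}_t,t,c), \qquad a_t = \bar{\alpha}_t^{-1/2}, \quad b_t = \sqrt{(1-\bar{\alpha}_t)/\bar{\alpha}_t}.
\]
Both coefficients are deterministic for a fixed schedule. The map $(\mathbf{z}_t,\epsilon_\theta)\mapsto \|\hat{\mathbf{z}}_0^{(t)}\|$ is then $\max(a_t,b_t)$-Lipschitz in each argument, by the triangle inequality.

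\textbf{Step 2: sub-Gaussianity.} I will argue that $\|\mathbf{z}_t\|$ is sub-Gaussian for normal prompts, and I will invoke the same modelling assumption already used for $\|\delta_t\|$, namely that $\epsilon_\theta$ is sub-Gaussian (cf.\ the proof of Theorem~\ref{thm:normal-stability}).
\begin{itemize}
\item For $\|\mathbf{z}_t\|$, start from $\mathbf{z}_T\sim\mathcal{N}(0,I)$, whose norm concentrates by Gaussian concentration of Lipschitz functions. Then propagate through the DDIM recursion: $\mathbf{z}_t$ is an affine combination of $\mathbf{z}_{t+1}$ and $\epsilon_\theta$, and Theorem~\ref{thm:normal-stability} already controls the increments $\delta_t$.
\item A Lipschitz image of sub-Gaussian variables is sub-Gaussian with a constant scaled by $\max(a_t,b_t)$, so $\|\hat{\mathbf{z}}_0^{(t)}\|$ is sub-Gaussian at each $t$.
\end{itemize}

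\textbf{Step 3: tail bound and union bound.} Identify $\mu_0^{(t)}$ and $\sigma_0^{(t)}$ with the population mean and standard deviation, and take the sub-Gaussian proxy variance to be comparable to $\sigma_0^{(t)2}$. Then
\[
\mathbb{P}\bigl(\bigl|\|\hat{\mathbf{z}}_0^{(t)}\|-\mu_0^{(t)}\bigr| \ge \gamma\sigma_0^{(t)}\bigr) \le 2e^{-\gamma^2/2}.
\]
A union bound over $t=1,\dots,T$ makes the failure probability at most $2Te^{-\gamma^2/2}$.

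I will also add a remark on finite-sample estimation of $\mu_0^{(t)}$ and $\sigma_0^{(t)}$. Standard concentration of the empirical mean and variance over the reference set perturbs the interval by $O(\sigma_0^{(t)}/\sqrt{n})$, which can be absorbed by a slightly smaller effective $\gamma$.

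\textbf{Main obstacle.} The hard step is justifying that the proxy variance matches $\sigma_0^{(t)}$ uniformly in $t$. For small $t$ this is plausible, since $b_t$ is small and $\hat{\mathbf{z}}_0^{(t)}\approx\mathbf{z}_0$. For large $t$, $a_t$ and $b_t$ blow up, and $\hat{\mathbf{z}}_0^{(t)}$ is a difference of two large, nearly cancelling terms. A naive Lipschitz bound would therefore give a proxy far larger than the actual spread.

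To handle this, I would use the fact that $\epsilon_\theta$ approximates the true noise, so $\hat{\mathbf{z}}_0^{(t)}$ approximates the posterior mean $\mathbb{E}[\mathbf{z}_0\mid\mathbf{z}_t]$. That is a Tweedie-type estimator whose norm is bounded by the bounded data-latent support, which would let me treat the variable as bounded and hence sub-Gaussian with a proxy tied to $\sigma_0^{(t)}$. Failing that, I would state sub-Gaussianity of $\|\hat{\mathbf{z}}_0^{(t)}\|$ directly as an assumption, paralleling the treatment of $\|\delta_t\|$ in Theorem~\ref{thm:normal-stability}.

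With either route, the conclusion holds with high probability rather than deterministically.
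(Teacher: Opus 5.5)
Your proposal follows the paper's proof essentially step for step: the DDIM affine form of $\hat{\mathbf{z}}_0^{(t)}$, sub-Gaussianity of $\|\hat{\mathbf{z}}_0^{(t)}\|$ inherited from the assumed sub-Gaussian $\epsilon_\theta$, the tail bound $2e^{-\gamma^2/2}$ at deviation $\gamma\sigma_0^{(t)}$, and a union bound giving failure probability at most $2Te^{-\gamma^2/2}$; your handling of the $\mathbf{z}_t$ term and of norms via the triangle inequality is more careful than the paper's one-line appeal to an ``affine transformation of sub-Gaussian variables.'' As in the paper, the argument really rests on assuming that the sub-Gaussian proxy matches $\sigma_0^{(t)}$ (your Tweedie route would not supply this, since boundedness only gives a proxy tied to the range and the CFG-guided $\hat{\mathbf{z}}_0^{(t)}$ is not a posterior mean), and, also shared with the paper, the union bound is vacuous at the paper's own $T=50$, $\gamma=3$, where $2Te^{-\gamma^2/2}\approx 1.1$, so ``high probability'' actually requires $\gamma$ to grow like $\sqrt{2\log T}$.
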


\begin{proof}[Sketch]
The DDIM formulation deterministically maps $\mathbf{z}_t$ and $\epsilon_\theta$ to $\hat{\mathbf{z}}_0^{(t)}$. For normal prompts, $\epsilon_\theta$ behaves regularly, leading to a stable and bounded reconstruction $\hat{\mathbf{z}}_0^{(t)}$. 
\end{proof}

\begin{proof}
Recall that in DDIM, the reconstructed initial latent at step $t$ is given by
\[
\hat{\mathbf{z}}_0^{(t)} = \frac{1}{\sqrt{\alpha_t}} \left( \mathbf{z}_t - \sqrt{1-\alpha_t} \cdot \epsilon_\theta(\mathbf{z}_t, t, c) \right),
\]
where $\alpha_t$ is a fixed schedule parameter and $\epsilon_\theta$ is the denoising prediction.

For normal prompts, the output $\epsilon_\theta(\mathbf{z}_t, t, c)$ is governed by the learned distribution and is assumed to be sub-Gaussian, as in the previous theorem. Thus, for each $t$, the random variable $\|\hat{\mathbf{z}}_0^{(t)}\|$ is also sub-Gaussian, since it is an affine transformation of sub-Gaussian variables.

Let $\mu_0^{(t)}$ and $\sigma_0^{(t)}$ denote the empirical mean and standard deviation of $\|\hat{\mathbf{z}}_0^{(t)}\|$ over normal prompts at step $t$. By the sub-Gaussian tail bound,
\[
\mathbb{P}\left( |\|\hat{\mathbf{z}}_0^{(t)}\| - \mu_0^{(t)}| \geq \gamma \sigma_0^{(t)} \right) \leq 2 \exp\left( -\frac{\gamma^2}{2} \right),
\]
where $\gamma$ is a positive constant.

Therefore, for a sufficiently large $\gamma$ (e.g., $\gamma=3$), the probability that $\|\hat{\mathbf{z}}_0^{(t)}\|$ falls outside the interval $[\mu_0^{(t)} - \gamma \sigma_0^{(t)},\, \mu_0^{(t)} + \gamma \sigma_0^{(t)}]$ is exponentially small. By the union bound over all $T$ steps, the probability that any $\|\hat{\mathbf{z}}_0^{(t)}\|$ deviates from $\mathcal{R}_0^{(t)}$ is at most $2T \exp(-\gamma^2/2)$, which remains small for moderate $T$ and $\gamma$.

Thus, with high probability, for all $t \in \{1, \dots, T\}$,
\[
\|\hat{\mathbf{z}}_0^{(t)}\| \in \mathcal{R}_0^{(t)} = [\mu_0^{(t)} - \gamma \sigma_0^{(t)},\, \mu_0^{(t)} + \gamma \sigma_0^{(t)}]
\]
as claimed.

This completes the proof that reconstructed latents from non-memorized prompts tend to stay within their corresponding empirical stability region throughout the generation process in DDIM.
\end{proof}

\section{Theoretical Connections to Wen's Metric}  \label{sec:relation_wen}

Our work introduces a trajectory stability framework that unifies and extends prior empirical observations on inference guidance. Here, we clarify how our statistical boundaries theoretically relate to, yet practically transcend, the guidance magnitude metric proposed by Wen et al.~\cite{wen2024detecting}.

\noindent
\textbf{Detection: From Absolute Heuristics to Statistical Outliers. }
Wen et al. define their detection signal based on the magnitude of the conditional guidance term: $M_{\text{Wen}} = \|\epsilon_\theta(\mathbf{z}_t, c) - \epsilon_\theta(\mathbf{z}_t, \phi)\|_2$.
In diffusion solvers, the latent update vector $\delta_t$ is locally proportional to this CFG output:
\[
\delta_t \approx \Delta t \cdot \epsilon_\theta^{\text{CFG}} = \Delta t \cdot \left[ \epsilon_\theta(\mathbf{z}_t, \phi) + w \cdot (\epsilon_\theta(\mathbf{z}_t, c) - \epsilon_\theta(\mathbf{z}_t, \phi)) \right].
\]
For memorized generations, the guidance term dominates the update magnitude. Thus, we have the approximation:
\[
\|\delta_t\| \approx \Delta t \cdot w \cdot \|\epsilon_\theta(\mathbf{z}_t, c) - \epsilon_\theta(\mathbf{z}_t, \phi)\| = C \cdot M_{\text{Wen}},
\]
where $C$ is a scaling factor involving the step size and guidance scale.

While this confirms that both our method and Wen's track the same underlying ``over-guidance'' force, the detection paradigms are fundamentally different.
Wen et al. treat detection as an \textit{absolute thresholding} problem on $M_{\text{Wen}}$. Since $M_{\text{Wen}}$ varies drastically across different timesteps and model checkpoints, finding a functional threshold is often heuristic and brittle.
In contrast, we reframe detection as a \textit{distributional outlier} problem. By enforcing the empirical stability region, we do not care about the absolute value of the force, but rather whether this force violates the statistical laws of natural image generation ($s_t = (\|\delta_t\| - \mu_t)/\sigma_t > \gamma$). 
This shift from magnitude to probability makes our method model-agnostic and time-invariant, providing a robust, calibrated standard for what constitutes an anomaly.

\noindent
\textbf{Mitigation: Global Suppression vs. Manifold Projection. }
The distinction is even more critical in mitigation. 
Prior work typically analyzes memorization in the noise space ($\epsilon$). Our introduction of the reconstructed initial latent $\hat{\mathbf{z}}_0$ maps these deviations into the \textit{signal space}, revealing the semantic impact of memorization.
Substituting the CFG formula into the definition of $\hat{\mathbf{z}}_0$, we obtain:
\[
\hat{\mathbf{z}}_0^{(t)} = \frac{\mathbf{z}_t}{\sqrt{\bar{\alpha}_t}} - \sqrt{\frac{1 - \bar{\alpha}_t}{\bar{\alpha}_t}} \left( \epsilon_\theta(\mathbf{z}_t, \phi) + w \cdot (\epsilon_\theta(\mathbf{z}_t, c) - \epsilon_\theta(\mathbf{z}_t, \phi)) \right).
\]
Let $\hat{\mathbf{z}}_{0, \phi}$ be the reconstruction based solely on the unconditional prediction. The deviation caused by the guidance term is:
\[
\hat{\mathbf{z}}_0^{(t)} - \hat{\mathbf{z}}_{0, \phi} = - w \sqrt{\frac{1 - \bar{\alpha}_t}{\bar{\alpha}_t}} \left( \epsilon_\theta(\mathbf{z}_t, c) - \epsilon_\theta(\mathbf{z}_t, \phi) \right).
\]
Taking the norm, we see the geometric interpretation of Wen's metric in the image manifold:
\[
\|\Delta \hat{\mathbf{z}}_0\| \propto w \cdot M_{\text{Wen}}.
\]
This relation shows that excessive guidance pushes the estimated image $\hat{\mathbf{z}}_0$ off the natural image manifold.
Wen et al. mitigate this by suppressing the guidance signal via iterative prompt embedding optimization.
Such a strategy acts as signal attenuation: they dampen the force driving the generation to avoid memorization. However, this comes with the heavy cost of optimization latency (``detect-then-retry'') and potential under- or over-mitigation from embedding modification only at the first timestep without an appropriate threshold on $M_{\text{Wen}}$.
Our approach, conversely, performs surgical correction via on-the-fly adaptive projection throughout the entire generation process. Instead of attenuating the guidance force blindly, we allow it to remain strong as long as the trajectory is within the stability region. 
Only when $\hat{\mathbf{z}}_0$ breaches the stability boundary do we intervene, clamping it specifically to the boundary of the normal distribution ($\mathcal{R}_0^{(t)}$). 
This strategy essentially retains the maximum safe guidance, preserving generation quality and semantic detail without the overhead of optimization.

\section{Algorithms}  \label{sec:algorithm}
In this section, we present the algorithm used for memorization detection and mitigation based on the empirical stability regions $\mathcal{R}_{\delta}^{(t)}$, $\mathcal{R}_0^{(t)}$ and $\mathcal{R}_z^{(t)}$. 

Algorithm~\ref{alg:detection} describes the memorization detection process, which computes the latent updates $\delta_t$ during the diffusion sampling process and evaluates their stability against the empirical stability regions. The maximum Z-score across the first $s$ denoising steps is returned as the memorization score $S_{\text{mem}}$. In practice, $s$ can be set from $0$ to $T$, depending on the desired detection sensitivity. Empirical results show that $s = 1$ can already achieve AUC $>0.99$ with SD 1.4, PNDM scheduler. Note that $s_t$ at each timestep $t$ itself can also be used to determine whether a generation is subject to memorization. 

\begin{algorithm}[ht]
    \caption{Memorization Detection via Empirical Stability Region on Latent Updates $\mathcal{R}_{\delta}^{(t)}$.}
    \label{alg:detection}
    \begin{algorithmic}[1]
    \Require{Prompt $c$, diffusion model $f_\theta$, empirical stability regions $\{\mathcal{R}_{\delta}^{(t)}\}_{t=1}^T$ (for $\delta_t$), total steps $T$, detection steps $s$}
    \Ensure{Memorization score $S_{\text{mem}}$}

    \State Initialize latent $\mathbf{z}_T$ from noise
    \For{$t = T-1$ to $T-s$}
        \State Sample noise with diffusion model $f_\theta(\mathbf{z}_{t+1}, t, c)$
        \State Generate $\mathbf{z}_t$ using sampler (e.g., DDIM/PNDM)
        \State Compute latent update $\delta_t = \mathbf{z}_t - \mathbf{z}_{t+1}$
        \State Compute Z-score $s_t = \left| \frac{\|\delta_t\| - \mu_t}{\sigma_t} \right|$
    \EndFor
    \State $S_{\text{mem}} \gets \max_{t} s_t$
    \State \Return $S_{\text{mem}}$ 
    \end{algorithmic}
\end{algorithm}

Algorithm~\ref{alg:mitigation} describes the on-the-fly memorization mitigation process. It uses the empirical stability regions to constrain the latent updates and reconstructed latents during the denoising process. The algorithm first initializes the latent from noise and then iteratively generates latents while checking for memorization based on the Z-scores. 
The detection is based on the core part (line 3 to 6) of Algorithm~\ref{alg:detection}, where it computes the latent updates $\delta_t$ and their Z-scores. When the Z-score exceeds the mild threshold for the first time, it checks if a memorization type is detected based on the reconstructed latent $\hat{\mathbf{z}}_0^{(t)}$. 
In practice, such deviation usually occurs at the first step, most of them happen within the first ten steps.
Once a memorization type is detected, it rescales the reconstructed latent $\hat{\mathbf{z}}_0^{(t)}$ to ensure it remains within the stability region, and optionally constrains the norms of $\mathbf{z}_t$ and $\delta_t$ as needed.

\begin{algorithm}[ht]
    \caption{Memorization Mitigation via On-the-Fly Stability-Constrained Adaptive Sampling}
    \label{alg:mitigation}
    \begin{algorithmic}[1]
    \Require{Prompt $c$, diffusion model $f_\theta$, empirical stability regions $\{\mathcal{R}_{\delta}^{(t)}\}_{t=1}^T$, $\{\mathcal{R}_z^{(t)}\}_{t=1}^T$, $\{\mathcal{R}_0^{(t)}\}_{t=1}^T$, thresholds $\tau_{\text{mild}}, \tau_{\text{strong}}$, mild mitigation steps $k$}
    \Ensure{Generated image $\mathbf{x}_0$}

    \Function{Rescale}{$\mathbf{v}, \mu$}
        \State $r \gets \mu / \|\mathbf{v}\|$
        \State \Return $r \cdot \mathbf{v}$
    \EndFunction

    \State Initialize latent $\mathbf{z}_T$ from noise
    \State $\texttt{mem\_type} \gets$ \texttt{none}
    \For{$t = T-1$ to $0$}
        \State Generate $\mathbf{z}_t$ using sampler (e.g., DDIM/PNDM)
        \State Compute latent update $\delta_t = \mathbf{z}_t - \mathbf{z}_{t+1}$
        \State Compute reconstructed latent $\hat{\mathbf{z}}_0^{(t)}$
        \State Compute Z-scores: $s_t = \left| \frac{\|\delta_t\| - \mu_t}{\sigma_t} \right|$, $s_0^{(t)} = \left| \frac{\|\hat{\mathbf{z}}_0^{(t)}\| - \mu_0^{(t)}}{\sigma_0^{(t)}} \right|$
        \If{$\texttt{mem\_type} = \texttt{none}$ \textbf{and} $s_t > \tau_{\text{mild}}$} \Comment{Set \texttt{mem\_type} based on $s_0^{(t)}$ when $s_t$ first exceeds mild threshold}
            \If{$s_0^{(t)} > \tau_{\text{strong}}$}  
                \State $\texttt{mem\_type} \gets$ \texttt{strong}
            \ElsIf{$s_0^{(t)} > \tau_{\text{mild}}$}
                \State $\texttt{mem\_type} \gets$ \texttt{mild}
            \EndIf
        \EndIf
        \If{$s_t > \tau_{\text{mild}}$}
            \If{$\texttt{mem\_type} = \texttt{strong}$}
                \State $\hat{\mathbf{z}}_0^{(t)} \gets$ \Call{Rescale}{$\hat{\mathbf{z}}_0^{(t)}, \mu_0^{(t)}$}
                \State Optionally constrain $\|\mathbf{z}_t\|$ and $\|\delta_t\|$ within $\mathcal{R}_z^{(t)}$ and $\mathcal{R}_{\delta}^{(t)}$
            \ElsIf{$\texttt{mem\_type} = \texttt{mild}$ \textbf{and} $t < k$}
                \State $\hat{\mathbf{z}}_0^{(t)} \gets$ \Call{Rescale}{$\hat{\mathbf{z}}_0^{(t)}, \mu_0^{(t)}$}
                \State Optionally constrain $\|\mathbf{z}_t\|$ within $\mathcal{R}_z^{(t)}$
            \EndIf
        \EndIf
        \State Compute new latent $\mathbf{z}_t$ \Comment{Update $\mathbf{z}_t$ after rescale}
    \EndFor
    \State Decode final latent $\mathbf{z}_0$ to image $\mathbf{x}_0$
    \State \Return $\mathbf{x}_0$
    \end{algorithmic}
\end{algorithm}

\section{Experiment Settings}  \label{sec:experiment_settings_apdx}

\subsection{Datasets}
\begin{itemize}
    \item \textbf{Memorized Prompts:} For SD 1.4 and 1.5, we use $500$ memorized prompts sampled from~\cite{webster2023reproducible}, which are also used by~\cite{wen2024detecting} for evaluating memorization detection. For SD 2.1, we use $219$ memorized prompts, following the setup of~\cite{ren2024unveiling}.
    \item \textbf{Non-Memorized Prompts:} We follow~\cite{wen2024detecting} to build a $500$ prompts set from the Lexica-art gallery (\path{Gustavosta/Stable-Diffusion-Prompts}), the MS-COCO-2017 validation set and GPT-4 generated prompts. 
    \item \textbf{Reference Prompts:} We sample a total of $500$ non-memorized prompts from the LAION-400M dataset~\cite{schuhmann2021laion} as reference prompts for estimating the stability regions. From these, we randomly select $50$ prompts for all evaluation unless specified. These prompts are distinct from those used in the detection and mitigation tasks.
\end{itemize}

\subsection{Models}
For pretrained models, we use SD 1.4 (\path{CompVis/stable-diffusion-v1-4}), 1.5 (\path{runwayml/stable-diffusion-v1-5}), and 2.1 (\path{stabilityai/stable-diffusion-2-1}) from the HuggingFace model hub. 

For finetuned models, we reuse the SD 1.4 finetuned by~\cite{wen2024detecting} open-sourced in their GitHub repository. They use $200$ LAION data points, each duplicated $200$ times as the memorized set, and $120,000$ distinct LAION data points to retain the generalization ability of the model. The memorized set for this model is therefore the $200$ duplicated prompts used during fine-tuning.

\begin{figure*}[t]
\centering
\includegraphics[width=\linewidth]{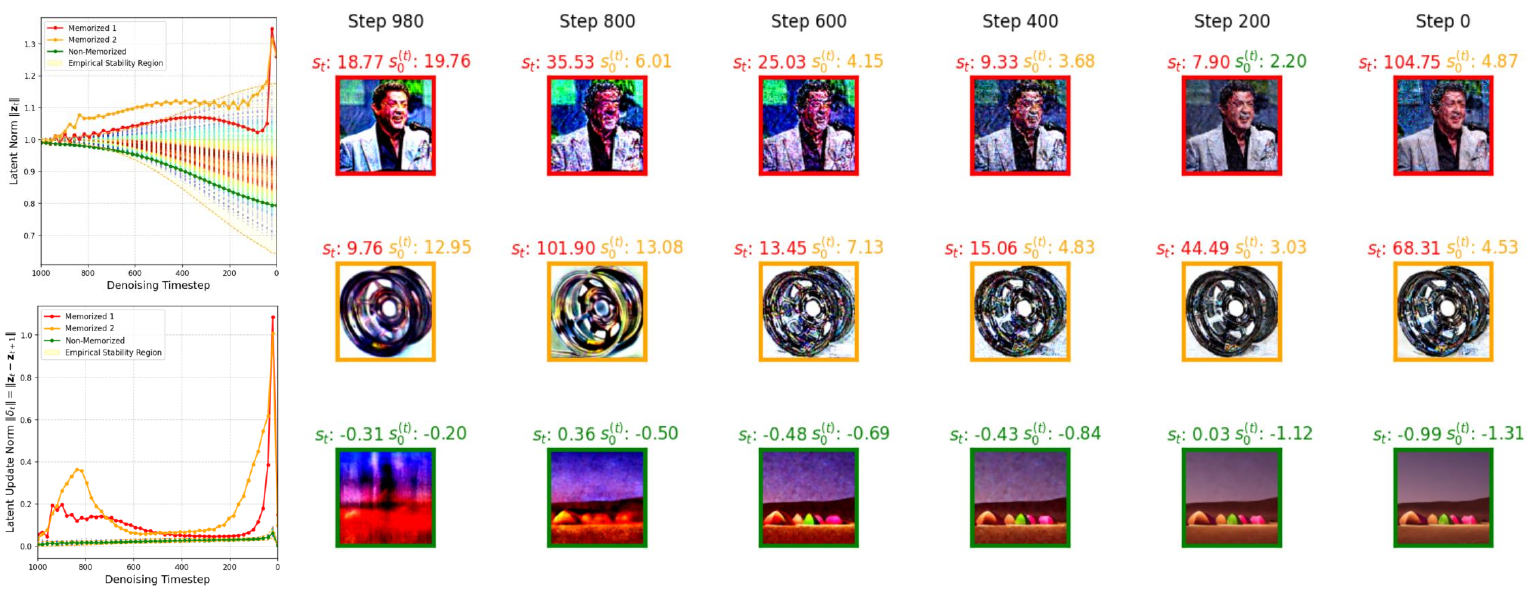}
\caption{PNDM generation process on SD 1.4 using strong/mild/non- memorized prompts.}
\label{fig:case_pndm}
\end{figure*}

\begin{figure*}[t]
\centering
\includegraphics[width=\linewidth]{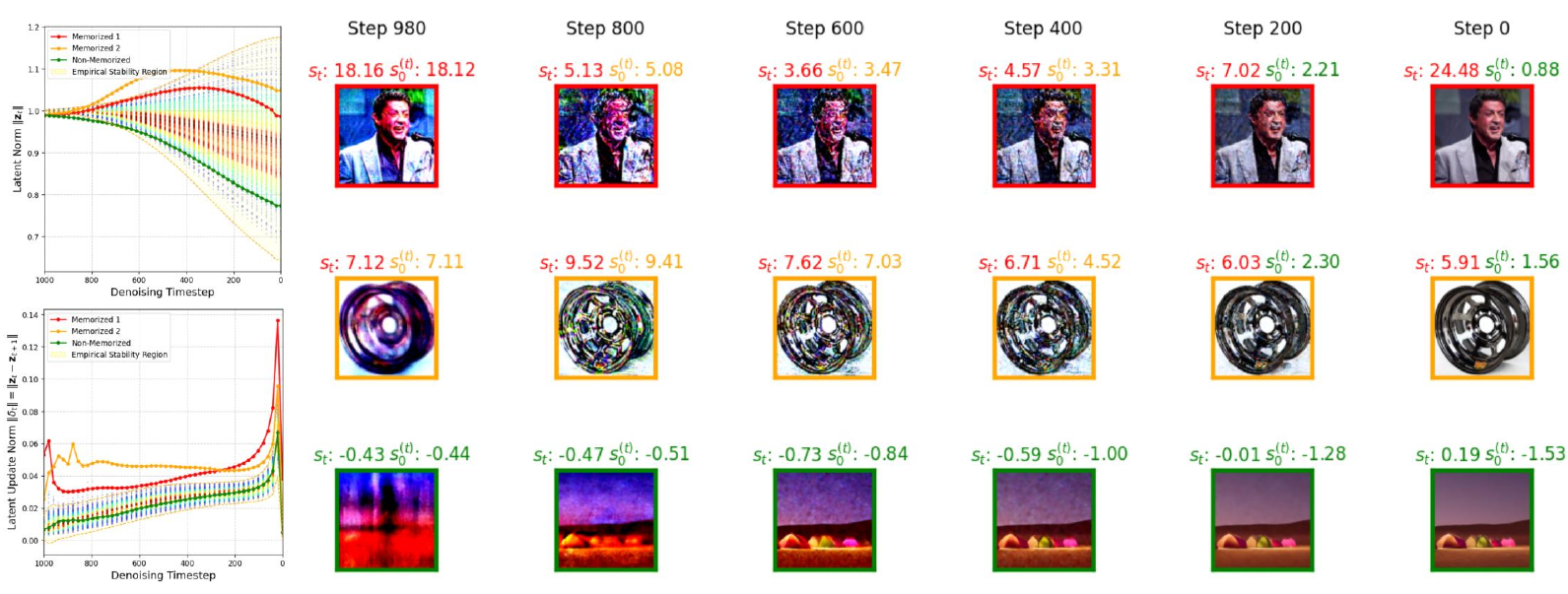}
\caption{DDIM generation process on SD 1.4 using strong/mild/non- memorized prompts.}
\label{fig:case_ddim}
\end{figure*}

\subsection{Metrics}  \label{sec:metrics_apdx}
Following previous works~\cite{somepalli2023understanding,ren2024unveiling,wen2024detecting,jain2025classifier}, we use the following metrics: 
\begin{itemize}
    \item \textbf{AUC} (Area Under the ROC Curve): Measures the overall performance of the detection method over varying thresholds, implemented with \path{scikit-learn}.
    \item \textbf{TPR@$1\%$FPR} (True Positive Rate at 1\% False Positive Rate): Evaluates the detection performance at a low false positive rate, implemented with \path{scikit-learn}.
    \item \textbf{SSCD} (Self-Supervised Copying Detection~\cite{pizzi2022self}): Evaluates the similarity between two images with a contrastive trained model. We use the \path{sscd_disc_large} model parameter provided in the GitHub repository of the original paper, in accordance with~\cite{wen2024detecting}.
    \item \textbf{CLIP Score}: Measures the semantic similarity between generated images and text prompts using CLIP embeddings. We use the \path{openai/clip-vit-base-patch32} model from the \path{transformers} library.
    \item \textbf{FID} (Fréchet Inception Distance): Evaluates the quality and diversity of generated images by comparing their distribution to real images using Inception extracted features. We use the \path{clean-fid} library to compute FID scores, which is a widely used implementation for computing FID.
\end{itemize}

We evaluate CLIP score on memorized prompts and non-memorized prompts separately, and report both scores.
When calculating FID, we use the LAION-400M dataset as the reference distribution, following the setting of~\cite{ren2024unveiling}. The FID is computed as the feature distribution divergence between the set of synthesized images 
and the reference distribution.

\subsection{Baseline Settings}

\begin{itemize}
    \item \textbf{RTA}~\cite{somepalli2023understanding}: Following the implementation in~\cite{wen2024detecting}, we randomly add four tokens for each prompt using the pipeline's tokenizer.
    \item \textbf{Ren et al.}~\cite{ren2024unveiling}: For detection, we adopt the pipeline from the official open-source implementation. For each sample, we record the attention entropy at every step of the denoising process and then compute a final detection score using the metric $D$ proposed in the original paper. For mitigation, we scale or mask the attention scores of the begin and summary tokens within the cross-attention layers following the hyperparameters in the open-source code.
    \item \textbf{Wen et al.}~\cite{wen2024detecting}: We also utilize the official pipeline for this baseline. For detection, the guidance magnitude at each denoising step is saved as the score. For multi-step detection, we follow the source code's approach by averaging the scores from all preceding steps to obtain the final score. For mitigation, we optimize for $10$ steps at the first step of the denoising process, with $l_{target}=3$ according to the settings in the open-source code.
    \item \textbf{Jeon et al.}~\cite{jeon2025understanding}: We use the official open-source code provided by the authors. For detection, we only expand the detection timesteps to the first $3$ steps since the computational cost of this method is very high. Note that the original paper only performs detection at the first step, which should lead to slightly worse detection performance. For mitigation, we follow the default settings with an optimization threshold of $8.2$ and a maximum of $10$ iterations.
    \item \textbf{Jain et al.}~\cite{jain2025classifier}: We dynamically find the first local minimum of the latent update to use as the ``transition point'' following the implementation in the open-source code, and take the negative of the guidance scale before the transition point, i.e., the Opposite-Guidance setting.
\end{itemize}

For all baselines and our proposed method, we set the number of inference steps to $50$ and the guidance scale to $7.5$ following the settings of all these baselines. 

For detection, we conduct evaluations under three distinct settings: a single seed ($0$), four seeds ($0-3$), and eight seeds ($0-7$). In the multi-seed settings, the final scores are calculated by averaging the results from each individual seed. 
For mitigation, to simulate a real-world scenario, we apply a ``detect-then-mitigate'' strategy to the baselines that offer a detection mechanism. The threshold is set to achieve a $1\%$ false positive rate on LAION-400M reference prompts.
For mitigation, we uniformly use the DDIM scheduler and report the results from a fixed random seed $0$.

\subsection{Implementation Details}
Our method consists of two main components: memorization detection and memorization mitigation. 
We mainly rely on the \path{callback_on_step_end} feature of the Stable Diffusion pipeline to implement our method. 
To estimate the empirical stability regions, we record the latent representations and the denoising predictions at each step during the diffusion process. 
Then, for detection, we take the latent representations at each step and compute the distances (Z-scores) to the stability regions.
For mitigation, after each denoising step, we check if the current latent representation is within the stability region. If not, we estimate $\mathbf{z}_0^{(t)}$ with the current latent representation $\mathbf{z}_t$ and the denoising prediction $\epsilon_\theta(\mathbf{z}_t, t, c)$, and then we perform the mitigation in Algorithm~\ref{alg:mitigation}. The resulting latent replaces the original latent when the callback returns.

For hyperparameters, the thresholds $\tau_\text{mild}$ and $\tau_\text{strong}$ are empirically determined based on the statistical distribution of Z-scores from normal prompts. We set $\tau_\text{mild}=3$ and $\tau_\text{strong}=14$, which correspond to the $99.7\%$ and extreme outlier regions under a sub-Gaussian assumption. These values are validated to achieve robust detection and mitigation performance across multiple models and datasets in our experiments.
The mild mitigation steps $k$ is set to $10$, based on empirical analysis of the typical duration of instability in memorized generations with mitigation applied. Previous works~\cite{wen2024detecting,jain2025classifier} have made similar claims that this range is effective for mitigating memorization artifacts without significantly degrading image quality.

All the experiments are conducted on a machine with $8$ NVIDIA RTX 4090 GPUs, each with $24$ GB of VRAM.

\begin{table*}[t]
\centering
\caption{Memorization detection results with AUC, TPR@$1\%$FPR on SD 1.5. The three columns represent detection with the first 3 steps, all steps, and each step (Avg.). The best results are in \textbf{bold}.}
\begin{tabular}{lccccccc}
\toprule
\multicolumn{1}{c}{}                         &                                & \multicolumn{2}{c}{First 3 Steps}                             & \multicolumn{2}{c}{All Steps}           & \multicolumn{2}{c}{Avg. on Steps}       \\ \cline{3-8} 
\multicolumn{1}{c}{\multirow{-2}{*}{Method}} & \multirow{-2}{*}{Num of Seeds} & AUC↑                          & TPR@1\%FPR↑                   & AUC↑               & TPR@1\%FPR↑        & AUC↑               & TPR@1\%FPR↑        \\ \midrule
                                             & 1                              & \multicolumn{2}{c}{}                                          & 0.9609             & 0.6400             & \multicolumn{2}{c}{}                    \\
                                             & 4                              & \multicolumn{2}{c}{}                                          & 0.9412             & 0.6980             & \multicolumn{2}{c}{}                    \\
\multirow{-3}{*}{Ren et al.~\cite{ren2024unveiling}}                 & 8                              & \multicolumn{2}{c}{\multirow{-3}{*}{-}}                       & 0.9413             & 0.6940             & \multicolumn{2}{c}{\multirow{-3}{*}{-}} \\ \midrule
                                             & 1                              & \textbf{0.9827}               & 0.8360               & 0.9737             & \textbf{0.9020}    & 0.9776             & \textbf{0.8964}    \\
                                             & 4                              & 0.9954                        & 0.9760                        & 0.9951             & \textbf{0.9780}    & 0.9958             & \textbf{0.9762}    \\
\multirow{-3}{*}{Wen et al.~\cite{wen2024detecting}}                 & 8                              & 0.9984                        & {0.9860}               & 0.9976             & 0.9860             & 0.9981             & \textbf{0.9857}    \\ \hline
                                             & 1                              & 0.9782                        & \textbf{0.8580}                        & \multicolumn{2}{c}{}                    & \multicolumn{2}{c}{}                    \\
                                             & 4                              & {0.9943} & {0.9760} & \multicolumn{2}{c}{}                    & \multicolumn{2}{c}{}                    \\
\multirow{-3}{*}{Jeon et al.~\cite{jeon2025understanding}}                & 8                              & \textbf{0.9994} & \textbf{0.9900} & \multicolumn{2}{c}{\multirow{-3}{*}{-}} & \multicolumn{2}{c}{\multirow{-3}{*}{-}} \\ \midrule
                                             & 1                              & 0.9789                        & 0.8140                        & \textbf{0.9780}    & 0.8620             & \textbf{0.9798}    & 0.8570             \\
                                             & 4                              & \textbf{0.9965}               & \textbf{0.9800}               & \textbf{0.9970}    & \textbf{0.9780}    & \textbf{0.9967}    & 0.9732             \\
\multirow{-3}{*}{Ours}                       & 8                              & {0.9989}               & 0.9840                        & \textbf{0.9997}    & \textbf{0.9920}    & \textbf{0.9993}    & 0.9849             \\ \bottomrule
\end{tabular}
\label{tab:detect_sd1.5_main}
\end{table*}

\begin{table*}[t]
\centering
\caption{Memorization detection results with AUC, TPR@$1\%$FPR on SD 2.1. The three columns represent detection with the first 3 steps, all steps, and each step (Avg.). The best results are in \textbf{bold}.}
\begin{tabular}{lccccccc}
\toprule
\multicolumn{1}{c}{\multirow{2}{*}{Method}} & \multirow{2}{*}{Num of Seeds} & \multicolumn{2}{c}{First 3 Steps}      & \multicolumn{2}{c}{All Steps}          & \multicolumn{2}{c}{Avg. on Steps}      \\ \cline{3-8} 
\multicolumn{1}{c}{}                        &                               & AUC↑               & TPR@1\%FPR↑       & AUC↑               & TPR@1\%FPR↑       & AUC↑               & TPR@1\%FPR↑       \\ \midrule
\multirow{3}{*}{Ren et al.~\cite{ren2024unveiling}}                 & 1                             & \multicolumn{2}{c}{\multirow{3}{*}{-}} & 0.9343             & 0.4566            & \multicolumn{2}{c}{\multirow{3}{*}{-}} \\
                                            & 4                             & \multicolumn{2}{c}{}                   & 0.9461             & 0.5297            & \multicolumn{2}{c}{}                   \\
                                            & 8                             & \multicolumn{2}{c}{}                   & 0.9477             & 0.6393            & \multicolumn{2}{c}{}                   \\ \midrule
\multirow{3}{*}{Wen et al.~\cite{wen2024detecting}}                 & 1                             & \textbf{0.9752}    & 0.4475            & 0.9747             & \textbf{0.9589}   & \textbf{0.9717}    & \textbf{0.8612}   \\
                                            & 4                             & \textbf{0.9952}    & \textbf{0.9406}   & 0.9914             & \textbf{0.9817}   & 0.9909             & \textbf{0.9763}   \\
                                            & 8                             & \textbf{0.9967}    & \textbf{0.9680}   & 0.9931             & \textbf{0.9863}   & 0.9937             & \textbf{0.9824}   \\ \midrule
\multirow{3}{*}{Jeon et al.~\cite{jeon2025understanding}}                & 1                             & 0.9429             & \textbf{0.6301}   & \multicolumn{2}{c}{\multirow{3}{*}{-}} & \multicolumn{2}{c}{\multirow{3}{*}{-}} \\
                                            & 4                             & 0.9839             & 0.8356            & \multicolumn{2}{c}{}                   & \multicolumn{2}{c}{}                   \\
                                            & 8                             & 0.9909             & 0.8630            & \multicolumn{2}{c}{}                   & \multicolumn{2}{c}{}                   \\ \midrule
\multirow{3}{*}{Ours}                       & 1                             & 0.9622             & 0.4247            & \textbf{0.9773}    & 0.8447            & 0.9631             & 0.7139            \\
                                            & 4                             & 0.9935             & 0.9087            & \textbf{0.9951}    & 0.9589            & \textbf{0.9939}    & 0.9500            \\
                                            & 8                             & 0.9952             & 0.9178            & \textbf{0.9960}    & 0.9726            & \textbf{0.9962}    & 0.9645            \\ \bottomrule
\end{tabular}
\label{tab:detect_sd2.1_main}
\end{table*}

\section{More Experiment Results}  \label{sec:more_experiment_results_apdx}

\subsection{Degraded Generations with Memorized Prompts}

To further elucidate our finding that memorized prompts lead to degraded generation, we present additional degradation examples.

In Fig.~\ref{fig:case_pndm}, we illustrate the PNDM generation process for strong, mild and non-memorized prompts, represented in red, orange, and green, respectively. For memorized prompts, degradation occurs throughout the entire denoising process, whereas the generation quality for non-memorized prompts remains normal. As can be seen, both the latent norm and latent update norm for memorized prompts deviate from the stability region. This deviation is more pronounced in the latter, demonstrating that our proposed detection metric provides a strong signal for identifying memorization.

In Fig.~\ref{fig:case_ddim}, we present the generation results for the same prompts using the DDIM scheduler. Although the final images generated by DDIM do not exhibit degradation, the intermediate results suffer from degradation because the latent norm and latent update norm temporarily deviate from the stability region during the process. This example illustrates that the latent update can be used as a general signal for detecting memorization, even when the final image quality is not degraded.

\subsection{Distribution and Scale of Reference Prompts} \label{sec:ref_prompt_dist_apdx}
To investigate whether detection performance depends on the distribution of reference prompts, we analyze the norm distributions during inference using reference prompts from various data sources, as shown in Fig.~\ref{fig:non_mem_stability_regions_more_datasets}. 
We observe that the distributions of both $\mathbf{z}_t$ and $\delta_t$ norms span similar ranges across different sources, indicating that our method is robust to the choice of reference prompts. 
Among these sources, we select prompts from LAION-400M, as it is the training set for SD and thus most accessible to model providers in practice.

Furthermore, varying the scale of reference prompts also yields similar norm distributions, as illustrated in Fig.~\ref{fig:sample_size_stability_regions}. 
We further evaluate detection performance with different numbers of reference prompts in Fig.~\ref{fig:auc} and~\ref{fig:tpr_at_1_fpr}. 
The results demonstrate that a moderate number of prompts is sufficient to achieve strong detection performance. 
To balance detection accuracy and computational cost, we use $50$ reference prompts in all experiments.

\begin{figure}[t]
\centering
\includegraphics[width=\linewidth]{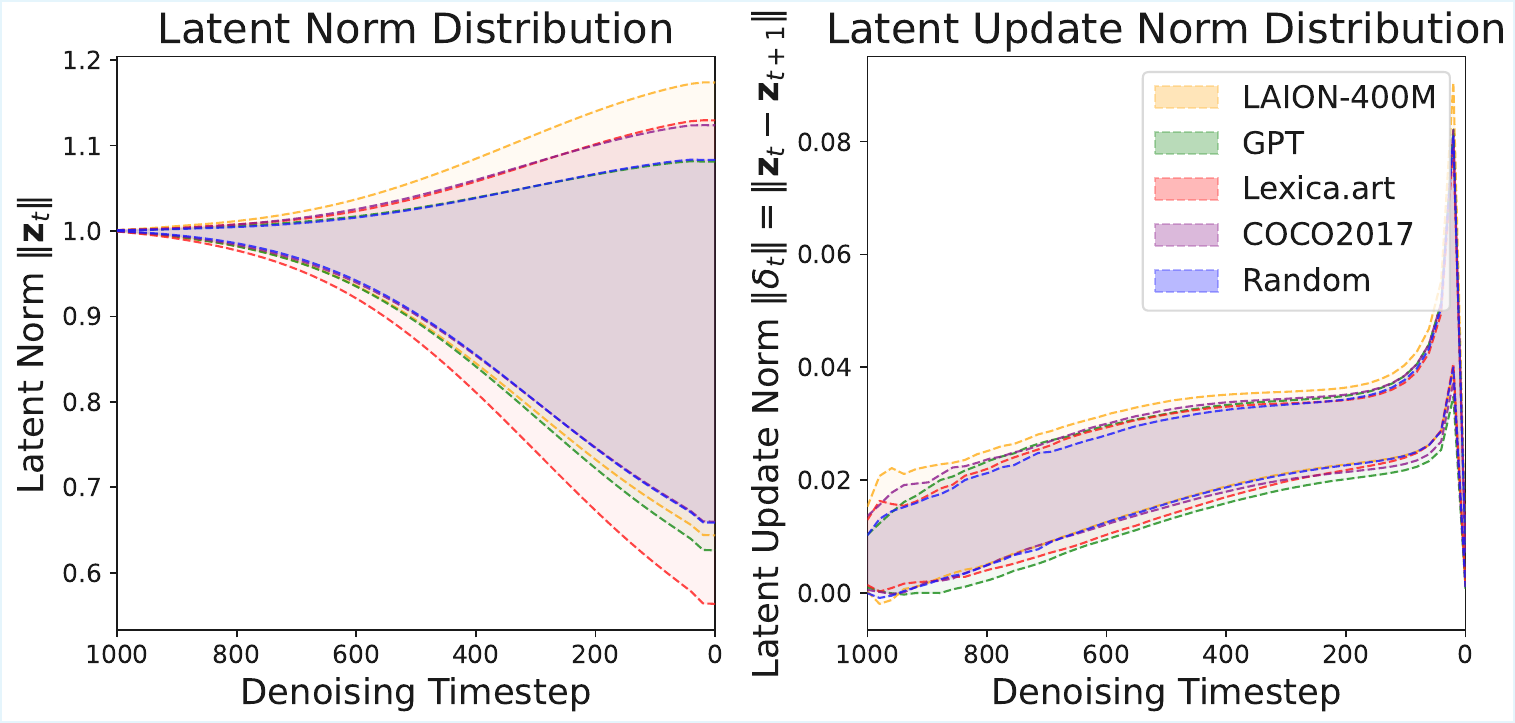}  
\caption{Similar stability regions by prompts from different non-memorized datasets.}
\label{fig:non_mem_stability_regions_more_datasets}
\end{figure}

\begin{figure}[t]
\centering
\includegraphics[width=\linewidth]{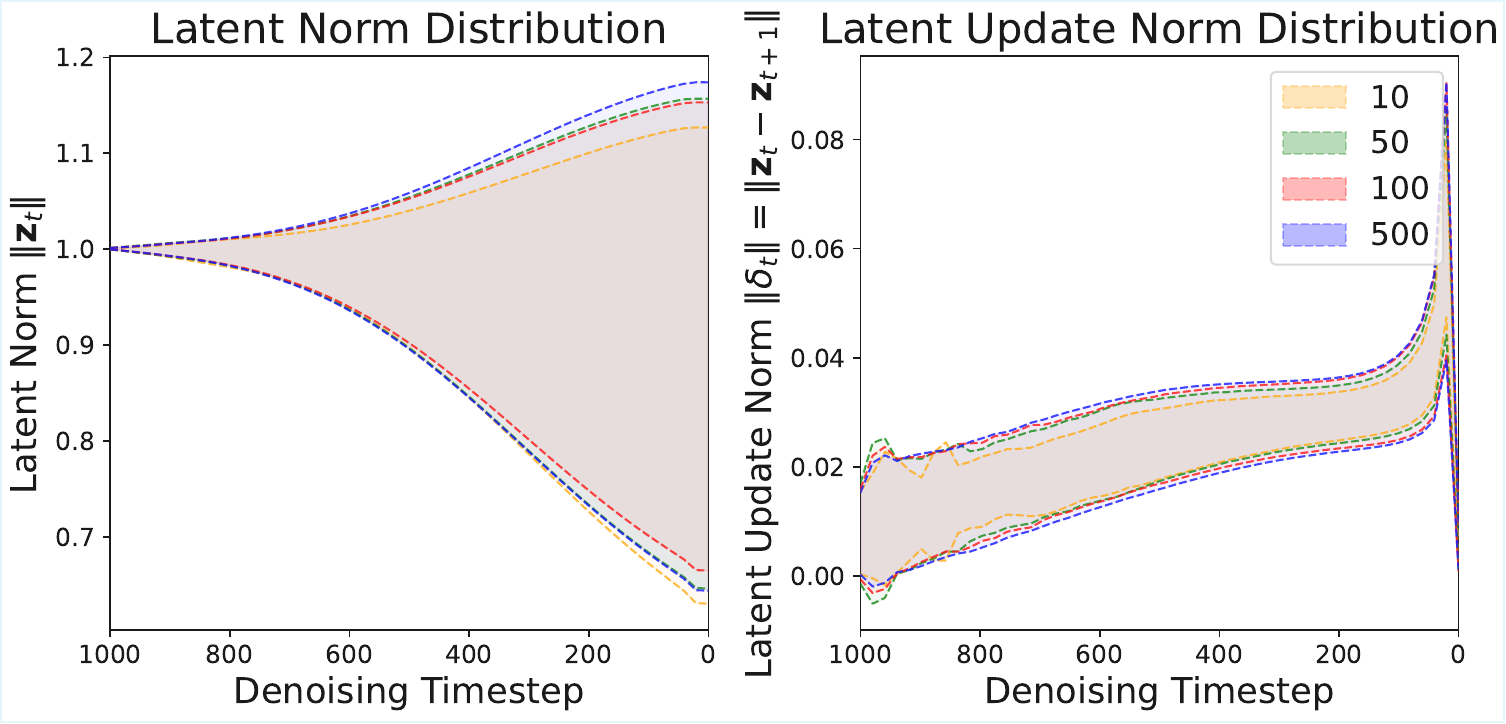}  
\caption{Similar stability regions by different numbers of reference prompts.}
\label{fig:sample_size_stability_regions}
\end{figure}

\begin{figure}[t]
\centering
\includegraphics[width=\linewidth]{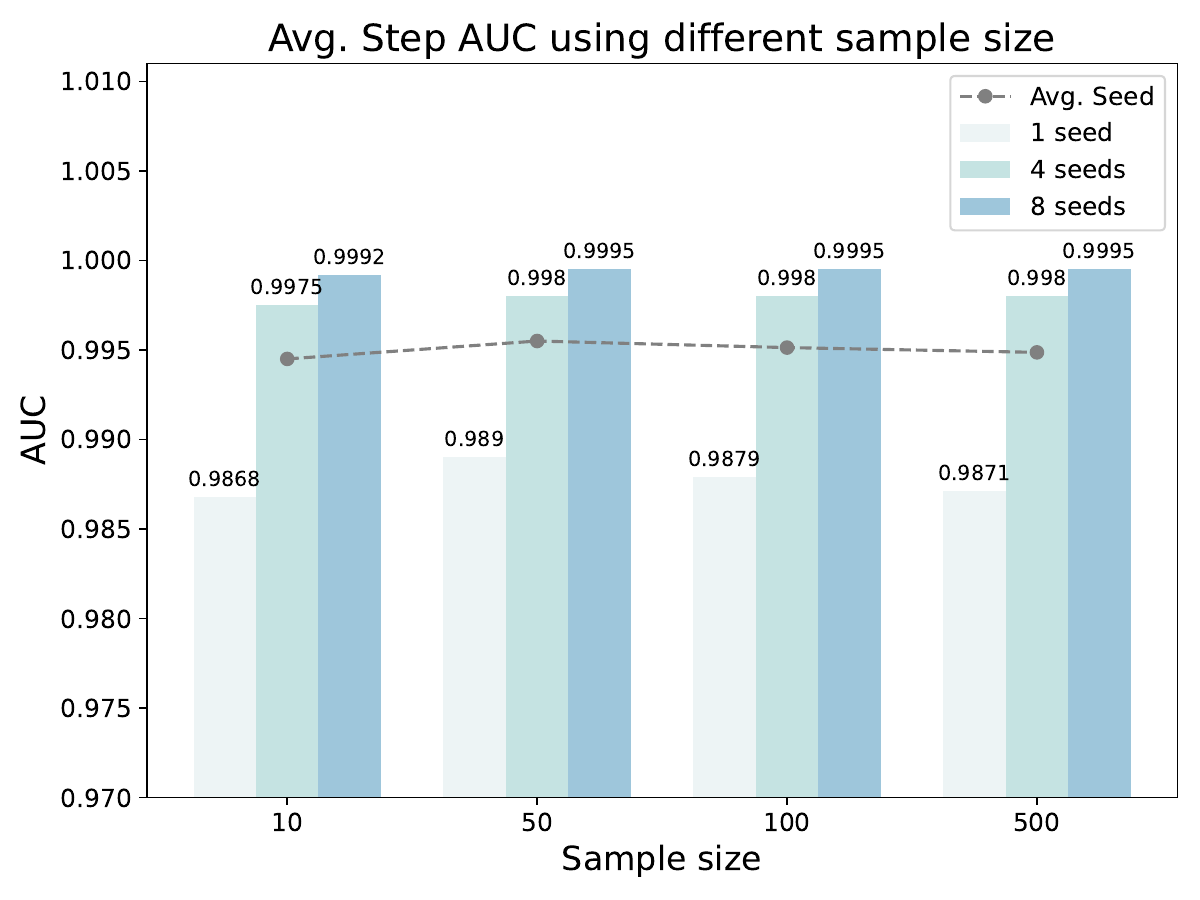}  
\caption{Detection AUC using different numbers of reference prompts.}
\label{fig:auc}
\end{figure}

\begin{figure}[t]
\centering
\includegraphics[width=\linewidth]{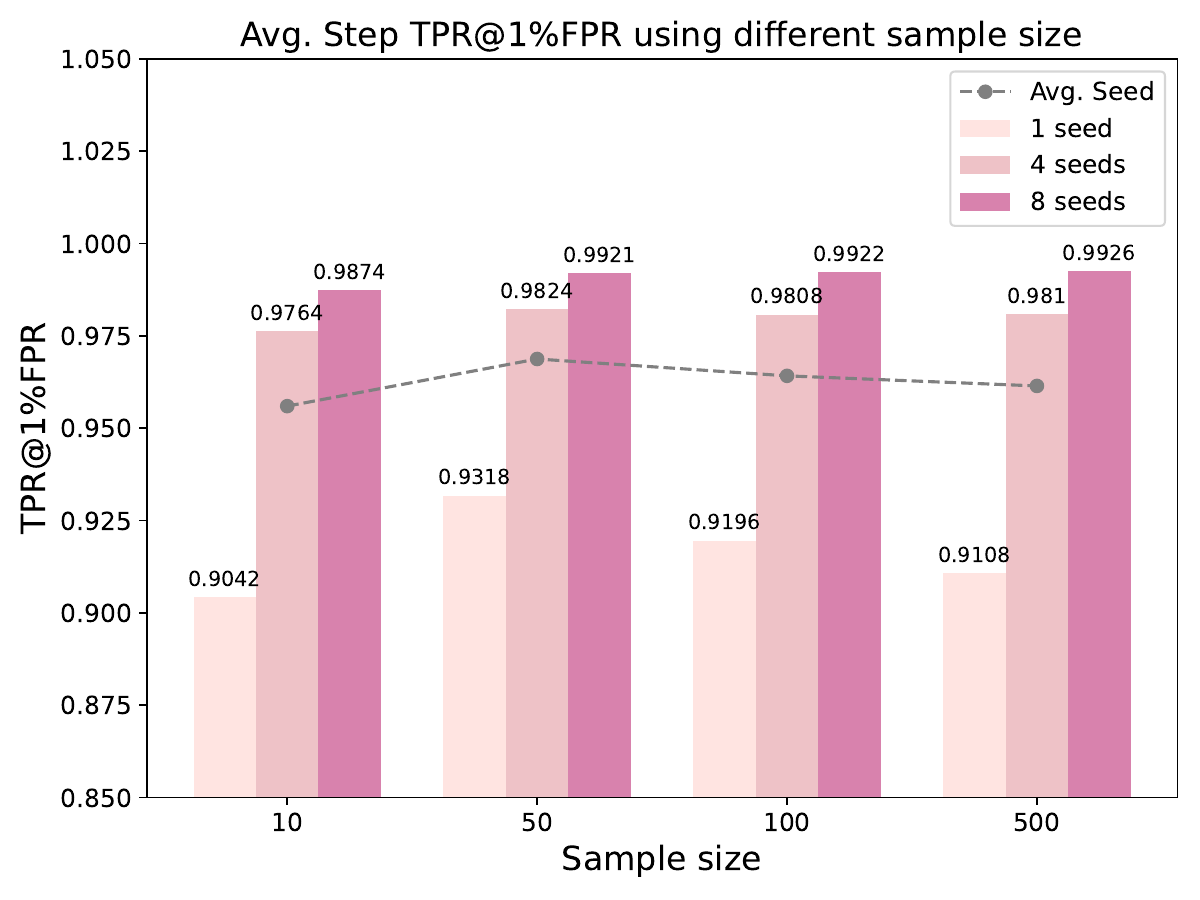}  
\caption{Detection TPR@1\%FPR using different numbers of reference prompts.}
\label{fig:tpr_at_1_fpr}
\end{figure}

\subsection{Memorization Detection on SD 1.5 and 2.1}  \label{sec:more_detection_results_apdx}
From Table~\ref{tab:detect_sd1.5_main} and~\ref{tab:detect_sd2.1_main}, we show that our method exhibits similar patterns of strong detection performance on SD 1.5 and SD 2.1 as on SD 1.4. 
This demonstrates the robustness and generalizability of our approach across different versions of SD models.

\subsection{Memorization Mitigation on Pretrained SD 1.4 and Finetuned SD 1.4}  \label{sec:more_mitigation_results_apdx}
Table~\ref{tab:mitigation_results_sd1.4_0209} and~\ref{tab:mitigation_ft_0209} present the mitigation results on pretrained SD 1.4 and finetuned SD 1.4, respectively.
From Table~\ref{tab:mitigation_ft_0209}, we show that our method effectively mitigates memorization artifacts in the finetuned SD 1.4 model, while maintaining high image quality and semantic alignment.
Note that the memorization induced by finetuning is less stronger than that in pretrained models. Baselines still struggle to mitigate memorization because they fail to detect memorization accurately when memorized generations become hard to recognize in this setting.
Jeon et al. achieve decent performance in this setting, likely because the weaker memorization induced by finetuning is more easily mitigated by their local curvature-aware optimization.

In addition, we point out that previous research usually does not consider the realistic detect-and-mitigate scenario in our work. Instead, they apply mitigation to all generations without detection in advance. 
Therefore, they may exhibit an illusion of mitigation effectiveness. 

\begin{table}[t]
\centering
\caption{Memorization mitigation results on SD 1.4.}
\resizebox{\linewidth}{!}{
\begin{tabular}{lcccc}
\toprule
\multicolumn{1}{c}{Method} & \multicolumn{1}{c}{Mem Rate↓} & \multicolumn{1}{c}{SSCD↓}    & \multicolumn{1}{c}{{\color[HTML]{373C43} CLIP score↑}} & \multicolumn{1}{c}{FID↓} \\ \midrule
No Mitigation              & 0.574                         & 0.519                        & \textbf{32.327}                                        & 198.044                  \\
RTA~\cite{somepalli2023understanding}                       & {0.300}  & {0.316} & {30.773}                          & 197.075                  \\
Ren et al.~\cite{ren2024unveiling}                 & 0.292                         & 0.2651                       & 31.651                                                 & 186.738                  \\
Wen et al.~\cite{wen2024detecting}                 & {0.462}  & {0.442} & {31.706}                          & 172.406                  \\
Jeon et al.~\cite{jeon2025understanding}                & 0.348                         & 0.394                        & 31.775                                                 & 185.689                  \\
Jain et al.~\cite{jain2025classifier}                & 0.094                         & 0.146                        & 28.512                                                 & \textbf{144.344}         \\
Ours                       & \textbf{0.000}                & \textbf{0.074}               & 29.025                                                 & 157.805                  \\ \bottomrule
\end{tabular}
}
\label{tab:mitigation_results_sd1.4_0209}
\end{table}
\begin{table}[t]
\centering
\caption{Memorization mitigation results on finetuned SD 1.4.}
\resizebox{\linewidth}{!}{
\begin{tabular}{lcccc}
\toprule
\multicolumn{1}{c}{Method} & Mem Rate↓      & SSCD↓          & {CLIP score↑} & FID↓                                                            \\ \midrule
No Mitigation              & 0.545          & 0.479          & 31.525                             & {193.610}          \\
RTA~\cite{somepalli2023understanding}                        & 0.275          & 0.321          & 30.659                             & {143.626}          \\
Ren et al.~\cite{ren2024unveiling}                 & 0.545          & 0.479          & 31.525                             & {193.610}          \\
Wen et al.~\cite{wen2024detecting}                 & 0.457          & 0.447          & 31.458                             & {155.586}          \\
Jeon et al.~\cite{jeon2025understanding}                & \textbf{0.000}          & 0.394          & \textbf{32.258}                    & {184.761}          \\
Jain et al.~\cite{jain2025classifier}                & 0.055          & 0.168          & 28.773                             & {\textbf{143.531}} \\
Ours                       & {0.015} & \textbf{0.137} & 29.937                             & {195.254}          \\ \bottomrule
\end{tabular}
}
\label{tab:mitigation_ft_0209}
\end{table}

\begin{figure}[t]
\centering\textbf{}
{
		\includegraphics[width=\linewidth]{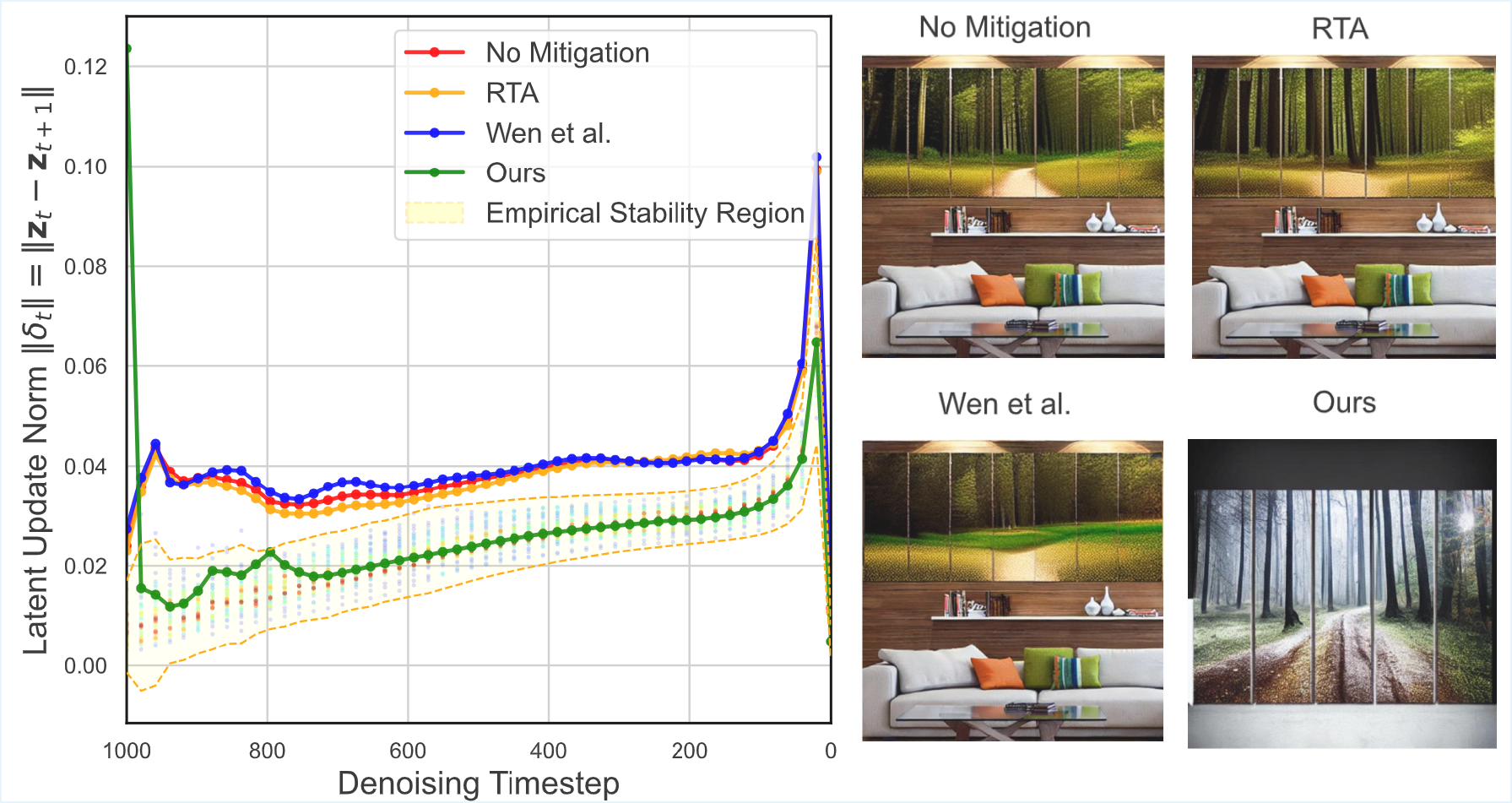}}\\
{
		\includegraphics[width=\linewidth]{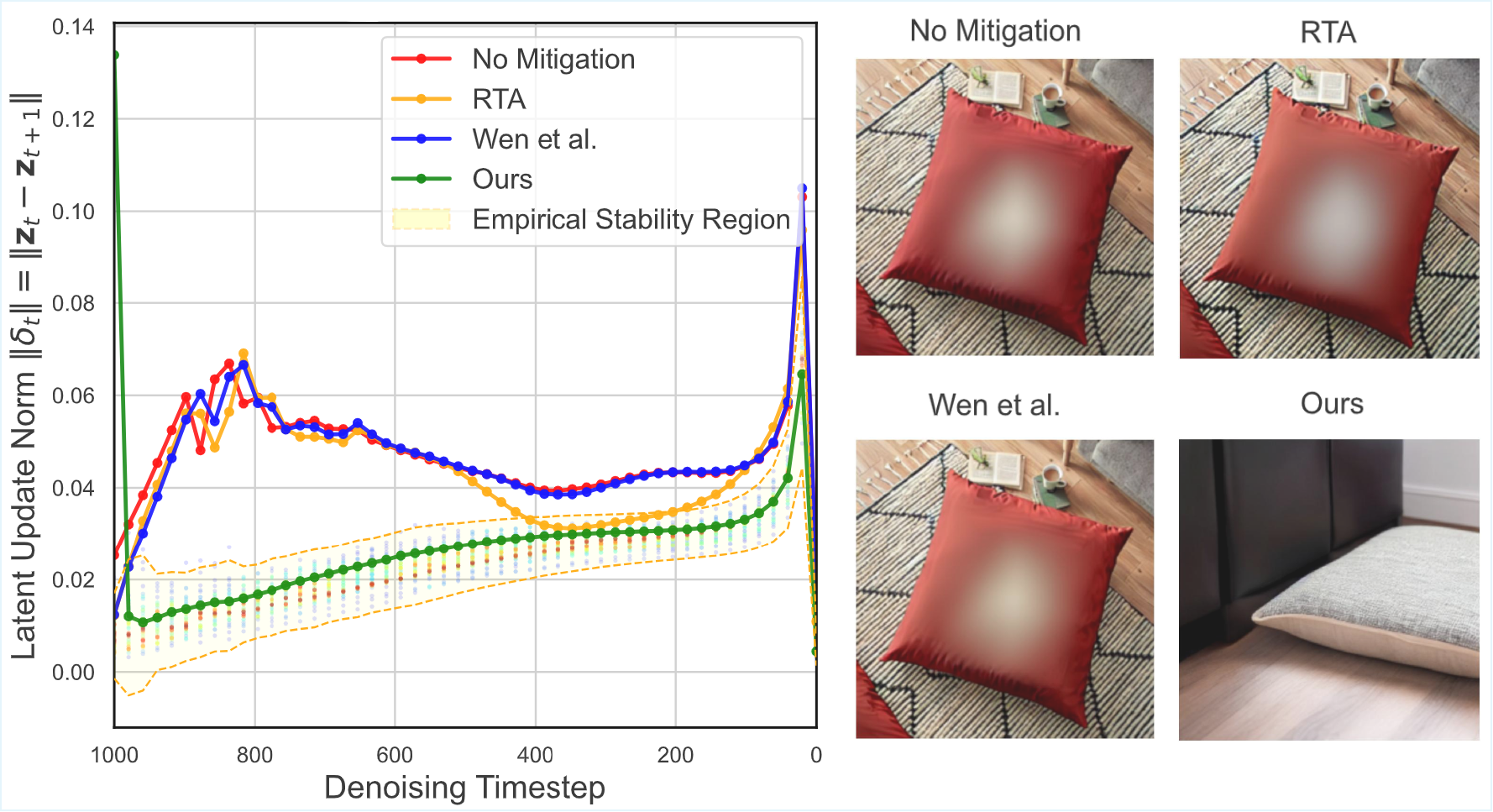}}
\caption{Comparison of latent update trajectories and generated images on SD 1.4 with memorized prompts.}
\label{fig:case_study_appendix}
\end{figure}

\subsection{More Case Studies}  \label{sec:more_case_study_apdx}
{Fig.~\ref{fig:case_study_appendix}} presents additional generation cases, further validating the effectiveness of our method in various scenarios. Compared to the baselines, the baseline generations are quickly drawn towards memorized content in the early steps, resulting in obvious memorization artifacts. In contrast, our method detects and adaptively mitigates memorization at each step, ultimately producing images without memorized content. These cases further demonstrate the necessity of on-the-fly detection and stepwise mitigation for preventing memorization in diffusion models.

\subsection{More Visualizations}
In addition to Fig.~\ref{fig:exp}, which visualizes the mitigation results on pretraind SD 1.4 with memorized prompts, we further present visualizations on finetuned SD 1.4 with memorized prompts (Fig.~\ref{fig:case_ft_mem}). 
These qualitative results further demonstrate that our method can effectively mitigate memorization generations.
In comparison, most baselines either fail to mitigate memorization or degrade the image quality, leading to poor semantic alignment with the original prompts.

\begin{figure*}[t]
\centering
\includegraphics[width=\linewidth]{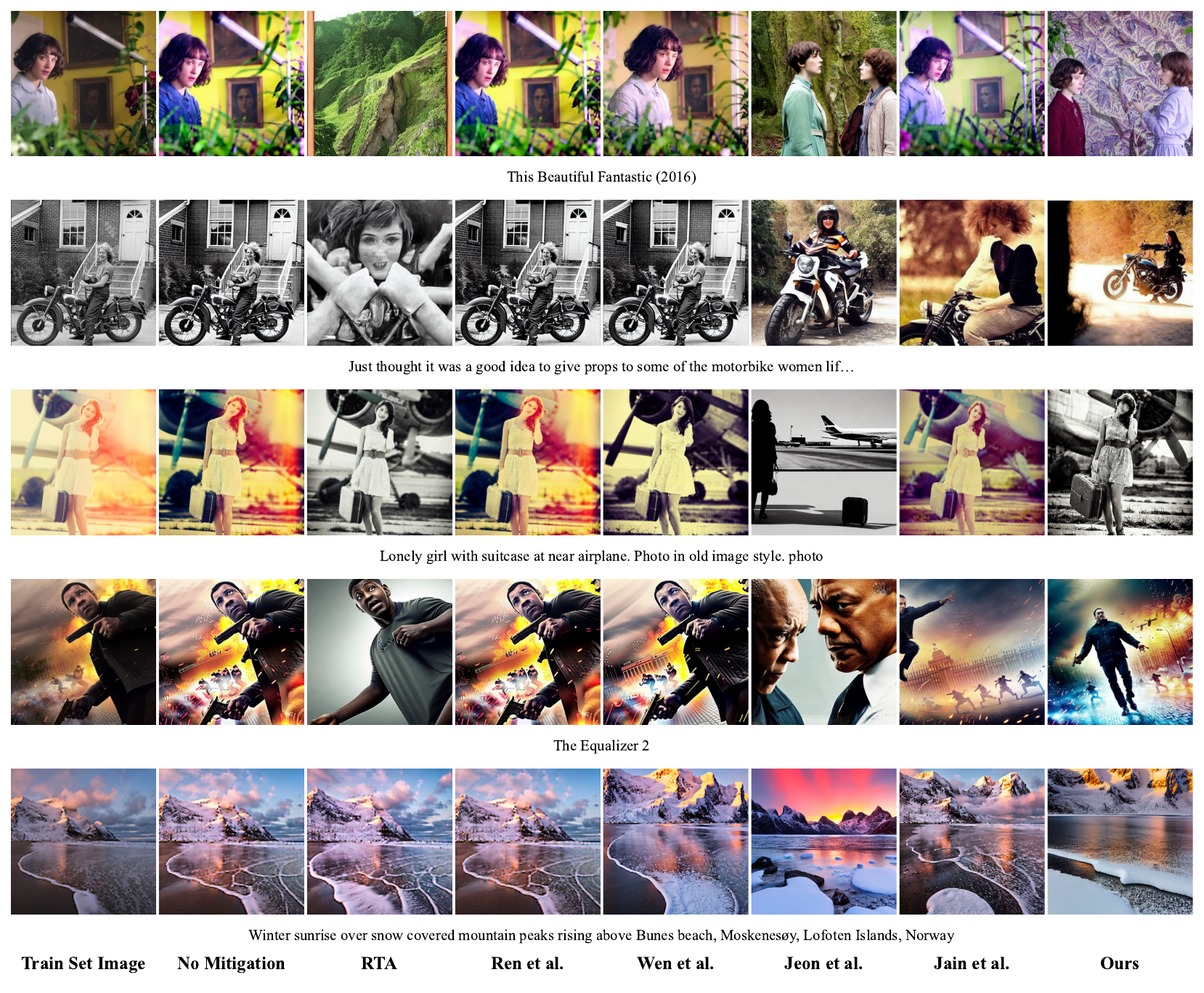}  
\caption{Comparison with baselines on \emph{memorized} prompts using finetuned SD 1.4.}
\label{fig:case_ft_mem}
\end{figure*}

\section{Prompts Used in Visualization}  \label{sec:prompts_used_in_visualization}

\subsection{Fig.~\ref{fig:intro}}
\begin{itemize}
    \item \textbf{Memorized - PNDM/DDIM:} \textit{The No Limits Business Woman Podcast}
    \item \textbf{Non-Memorized:} \textit{this is a pizza that is sliced up in pieces}
\end{itemize}

\subsection{Fig.~\ref{fig:method}}
\begin{itemize}
    \item \textbf{Strong Memorization:} \textit{Mothers influence on her young hippo}
    \item \textbf{Mild Memorization:} \textit{Full body U-Zip main opening - Full body U-Zip main opening on front of bag for easy unloading when you get to camp}
    \item \textbf{Non-Memorization:} \textit{Biobag BioBag Dog Waste Bags On a Roll - Case of 12 - 45 Count HGR 1268796}
\end{itemize}

\subsection{Fig.~\ref{fig:case_study}}
\begin{itemize}
    \item \textit{Foyer painted in HIGH TIDE}
\end{itemize}

\subsection{Fig.~\ref{fig:case_pndm} and~\ref{fig:case_ddim}}
\begin{itemize}
    \item \textbf{Strong Memorization:} \textit{Rambo 5 und Rocky Spin-Off - Sylvester Stallone gibt Updates}
    \item \textbf{Mild Memorization:} \textit{Aero 31-984040GRN 31 Series 13x8 Wheel, Spun, 4 on 4 BP, 4 Inch BS}
    \item \textbf{Non-Memorization:} \textit{Desert nomads with neon tents}
\end{itemize}

\subsection{Fig.~\ref{fig:case_study_appendix}}
\begin{itemize}
    \item \textbf{Top:} \textit{Designart Forest Road In Thick Woods Modern ForestWrapped Canvas Art - 5 Panels}
    \item \textbf{Bottom:} \textit{Meditation Floor Pillow}
\end{itemize}

\end{document}